\documentclass[ms,nonblindrev]{informs3_hide}
\OneAndAHalfSpacedXI

\usepackage{natbib, multirow, multicol, xspace, amsmath, pifont, algorithm,  subcaption}

\usepackage{algorithmicx}
\usepackage[noend]{algpseudocode}

\usepackage{caption}
\usepackage{booktabs}
\usepackage{accents}

\bibpunct[, ]{(}{)}{,}{a}{}{,}
\def\bibfont{\small}

\usepackage{bm}
\usepackage[titletoc]{appendix}
\usepackage[normalem]{ulem}
\usepackage[dvipsnames]{xcolor}
\usepackage[all]{xy}
\usepackage{threeparttable}
\usepackage{tikz}
\usetikzlibrary{arrows.meta}
\usepackage[mathscr]{euscript}
\usepackage{enumerate}

\usepackage[colorlinks=TRUE,citecolor=MyDarkBlue,urlcolor=MyDarkBlue,linkcolor=MyDarkBlue]{hyperref}
\definecolor{MyDarkBlue}{RGB}{0,0,139}

\newcommand{\mathbbm}[1]{\text{\usefont{U}{bbm}{m}{n}#1}}

\newcommand{\eps}{\varepsilon}
\newcommand{\bI}{\mathbbm{1}}

\newcommand{\cP}{\mathcal{P}}

\newcommand{\Var}{\mathrm{Var}}
\newcommand{\Cov}{\mathrm{Cov}}
\newcommand{\bE}{\mathrm{E}}

\newcommand{\bP}{\mathrm{P}}

\TheoremsNumberedThrough
\ECRepeatTheorems

\EquationsNumberedThrough

\definecolor{softblue}{RGB}{0, 0, 0}
\definecolor{softred}{RGB}{0, 0, 0}
\definecolor{revcolor}{RGB}{0, 0, 0}
\newcommand{\rev}[1]{#1}

\MANUSCRIPTNO{}
\renewcommand{\theARTICLETOP}{}

\begin{document}
\RUNTITLE{Allocating Human Oversight in AI-Enabled Analytics}

\TITLE{Allocating Human Oversight in AI-Enabled Analytics}

\ARTICLEAUTHORS{
\AUTHOR{Zikun Ye}
\AFF{Michael G. Foster School of Business, University of Washington, \EMAIL{zikunye@uw.edu}}
\AUTHOR{Jiameng Lyu\thanks{Corresponding Author: Jiameng Lyu}}
\AFF{Department of Management Science, School of Management, Fudan  University, Shanghai, China, \EMAIL{jiamenglyu@fudan.edu.cn}}
\AUTHOR{Rui Tao}
\AFF{Guanghua School of Management, Peking University, \EMAIL{taorui@stu.pku.edu.cn}}
}

\ABSTRACT{
\rev{Organizations increasingly deploy AI as a low-cost prediction layer in customer-facing decision processes, including demand sensing, service-quality monitoring, product testing, and market research, but AI-generated signals are unevenly reliable across tasks, products, and customer segments. Firms therefore still need scarce human validation (labels, audits, survey responses, or follow-up measurements) to anchor AI outputs to ground truth. Because human ground truth is itself noisy, varying across labelers and even across repeated judgments, the firm must collect and average several human labels per task, which makes human validation costly. We study how to allocate a limited human-validation budget across many AI-assisted tasks when reliability is heterogeneous and unknown before deployment. We cast this within tuned prediction-powered inference. Each human label both sharpens the AI-assisted estimate and reveals the task's rectification difficulty, the variance that remains after the AI prediction is optimally used as a control variate. If difficulties were known, the optimal allocation would follow a Neyman square-root rule; because they are unknown, we propose a policy based on upper confidence bounds that learns them online and steers validation toward tasks where AI is least reliable. We prove that the policy's terminal efficiency loss relative to the oracle allocation vanishes as the budget grows. In synthetic experiments and a real digital-twin survey with 68 tasks and over 2{,}000 respondents, it closes most of the gap to the oracle when reliability is heterogeneous, outperforming uniform and epsilon-greedy allocation; on the survey data it also outperforms explore-then-commit pilot designs and cuts uniform's 10--12\% gap to 2--6\%. The value of AI depends not only on model accuracy but also on the operational policy that targets human oversight where AI errors matter most.}
}

\KEYWORDS{\rev{Generative AI; Human-in-the-Loop AI Operations; Marketing Analytics; Operations--Marketing Interface; Adaptive Human Validation; Prediction-Powered Inference}}

\HISTORY{This version: \today}

\maketitle

\section{Introduction}

\rev{Generative AI is becoming a low-cost prediction layer in customer-facing decision processes. Firms use large language models (LLMs) to summarize reviews and chat logs, score open-ended feedback, simulate customer reactions to product concepts, support service-quality monitoring, and assist demand sensing, all at very low cost \citep{peng2025mega, ziems2024can, dominguez2024questioning}. These AI signals increasingly feed operational decisions: what to stock, which issues to escalate, how to staff service teams, which concepts or promotions to test, and where to deploy marketing resources.}

\rev{AI reliability, however, varies across products, customer segments, and task types (LLM predictions exhibit systematic biases and task-dependent accuracy; \citealp{motoki2024more, brucks2023prompt, toubia2025database}), and this heterogeneity is rarely known before deployment. Human input therefore remains the ground truth: firms must still collect survey responses, label open-ended feedback, audit AI-coded interactions, verify service-quality judgments, or run follow-up measurements. We use \emph{human validation} broadly for these costly human-provided ground-truth signals used to anchor and correct AI outputs. The productive response is not to replace humans with AI but to combine cheap AI predictions with human labels, reducing how many labels are needed while correcting AI errors; several frameworks formalize this, including prediction-powered inference \citep[PPI;][]{angelopoulos2023prediction, angelopoulos2024ppiplus}, fine-tuning-then-rectification \citep{wang2025finetune}, and mixed human--LLM designs \citep{broska2025mixed, yin2026synthetic}. Crucially, human ground truth is itself noisy, varying across labelers and even across repeated judgments by the same labeler, so a firm must collect and average several labels per task. The variation that remains after the cheap AI signal is used optimally as a control variate is exactly what governs how many human labels each task needs, and it exists in every human-in-the-loop application, not only in surveys.}

\rev{This creates an allocation problem: AI can generate many low-cost signals, but firms must decide where scarce human-validation capacity should be spent across a portfolio of tasks. LLM prediction quality varies dramatically across tasks, and this heterogeneity persists even after prompt engineering or fine-tuning \citep{dominguez2024questioning, wang2025finetune, krsteski2025valid}, directly determining how many human labels each task requires \citep{neyman1934two}. When difficulty is heterogeneous, directing more of the budget to the harder tasks substantially reduces aggregate error relative to uniform allocation; when it is homogeneous, uniform allocation is near-optimal and no adaptive method is needed. Uniform validation is thus simple but can be wasteful, while a separate pilot study is costly and brittle, since reliability estimates may not transfer across populations, prompts, models, or task portfolios. Many validation workflows are naturally sequential or batched, however (audits, labels, survey waves, and follow-up measurements arrive over time), so a firm can learn where AI is unreliable while it spends the validation budget.}

\rev{This paper studies the following question: \emph{how should a firm adaptively allocate a finite human-validation or labeling budget across many AI-assisted analytics tasks when task-level AI reliability is heterogeneous and unknown before deployment?}}

\rev{LLM-augmented surveys and digital-twin studies provide a concrete empirical testbed for this broader problem, because they generate paired human--AI observations across many analytics tasks \citep{brand2023using, argyle2023out, wang2024market}. We use survey questions as the running example, but the allocation unit is general (Table~\ref{tab:applications}). Figure~\ref{fig:pipeline} illustrates the allocation pipeline.}

\begin{figure}[t]
\centering
\resizebox{\textwidth}{!}{
\begin{tikzpicture}[
  box/.style={rectangle, draw, rounded corners=3pt, minimum height=1.1cm, text width=2.5cm, align=center, font=\small},
  arr/.style={-{Stealth[length=6pt]}, thick}
]
\node[box] (ai) at (0,0) {AI Prediction\\Layer};
\node[box] (tasks) at (5,0) {$Q$ Insight Tasks\\{\footnotesize(unknown $A_q$)}};
\node[box] (alloc) at (10,0) {Adaptive UCB\\Allocation};
\node[box] (out) at (15,0) {Calibrated\\Estimates};

\draw[arr] (ai) -- node[above, font=\footnotesize, yshift=1pt] {cheap signals} (tasks);
\draw[arr] (tasks) -- node[above, font=\footnotesize, yshift=1pt] {paired data} (alloc);
\draw[arr] (alloc) -- (out);

\node[box, dashed] (human) at (10,2.2) {Human Validation\\{\footnotesize(budget $B$, costly)}};
\draw[arr] (human) -- (alloc);

\draw[arr, densely dashed, gray] (alloc.south) -- ++(0,-0.9) -| node[below, pos=0.25, font=\footnotesize, text=gray] {residuals reveal $A_q$} (tasks.south);

\node[font=\footnotesize, text=gray] at (0,-1.0) {scalable, low cost};
\node[font=\footnotesize, text=gray] at (15,-1.0) {operational decisions};
\end{tikzpicture}
}
\caption{Adaptive human-validation allocation pipeline. The AI layer generates cheap predictions for $Q$ analytics tasks, but task-level AI reliability (rectification difficulty $A_q$) is heterogeneous and unknown. Our policy allocates scarce human validation online: each human label both sharpens the estimate and reveals the task's difficulty (dashed feedback), directing effort where AI is least reliable.}
\label{fig:pipeline}
\end{figure}

\rev{We develop an online allocation framework for this problem within tuned prediction-powered inference (PPI++). The key object is each unit's \emph{rectification difficulty} $A_q$, the variance of the tuned residual $Y_q-\lambda_q^* Y_q^{\mathrm{LLM}}$, i.e., the human-response variation left unexplained once the LLM prediction is optimally used as a control variate. When the LLM tracks human responses well, $A_q$ is small and few human labels suffice; when they diverge, $A_q$ is large and more labels are needed. If the difficulties $\{A_q\}$ were known, the optimal allocation would follow the classical Neyman square-root rule \citep{neyman1934two}, directing human labels in proportion to $\sqrt{w_q A_q / c_q}$, where $w_q$ is the importance weight and $c_q$ the per-label cost. This heterogeneity is real and large in practice: on the real survey data we study, the LLM's usefulness ranges from substantial to none across questions. The optimal PPI++ weight $\lambda_q^*$ spans $0$ to $0.72$, and 38\% of questions receive no variance reduction from the LLM. This is precisely the regime where adaptive allocation pays off.}

\rev{The difficulties are unknown before data collection and must be learned from the same responses used for estimation. We propose an Upper Confidence Bound (UCB) based allocation policy that learns them online while spending the budget: at each step it samples the unit with the highest uncertainty-adjusted marginal efficiency, balancing the current difficulty estimate against a confidence bound. Each human label plays a dual role: it sharpens the population estimate for that unit and, through the paired human--LLM residual, reveals that unit's difficulty. The policy needs no prior knowledge of LLM accuracy, is computationally lightweight, and is a drop-in alternative to uniform sampling. We prove it achieves $O(\ln B / B^2)$ regret relative to the Neyman oracle, with standard PPI ($\lambda{=}1$) and human-only sampling ($\lambda{=}0$) as nested cases, so the efficiency loss from not knowing the difficulties vanishes rapidly as the budget grows.}

We compare our UCB-based policy against several allocation strategies on both synthetic and real survey data. In controlled synthetic experiments ($Q{=}100$), our policy tracks the oracle closely at every budget level, and its regret follows the predicted $O(\ln B/B^2)$ rate. The synthetic results also reveal that the value of adaptive allocation is driven by heterogeneity in $\{A_q\}$ rather than by overall LLM quality: improving LLM accuracy reduces the MSE for all strategies equally, but it does not change which one performs best.

Our main empirical application uses the Twin-2K-500 digital-twin dataset \citep{toubia2025database}, which contains over 2{,}000 US respondents across 500+ survey questions; we use the 68-task retest battery from Wave~4, spanning cognitive heuristic tasks, behavioral economics experiments, and political opinion items, each with paired human and AI responses. We compare our policy against four benchmarks: the oracle (Neyman allocation with known difficulties), uniform allocation (equal samples per question), $\eps$-greedy (a standard adaptive baseline from the bandit literature), and explore-then-commit (ETC), which mimics the common practice of running a pilot study before committing to a fixed allocation. All policies use the same PPI++ estimator and differ only in how they allocate human labels. Our policy achieves the lowest MSE among all implementable policies at every budget level. The standard uniform allocation incurs 10--12\% higher allocation MSE than the oracle; our policy narrows this to 2--6\%, shrinking further as the budget grows. Beyond the baseline, we systematically vary importance weights and sampling costs and find that our policy maintains a gap of approximately 4\% regardless of how heterogeneous the weights or costs are, while uniform allocation degrades substantially. We also vary the degree of heterogeneity in rectification difficulties and confirm the same pattern as in the synthetic experiments: the value of adaptive allocation grows monotonically with heterogeneity, and when all questions have similar difficulty, uniform allocation is near-optimal.

\rev{\noindent\textbf{Our contributions.} This paper contributes to the operations--marketing interface in two ways. First, we formulate AI-enabled analytics as a human-validation allocation problem: AI systems can cheaply generate signals at scale, but operational decisions require scarce human ground truth to determine where those signals are reliable. The allocation unit $q$ is general (survey question, product concept, customer segment, service-audit category, demand-sensing stream, or question module), and the importance weights $w_q$ reflect the relative economic value of accuracy for each unit (e.g., margin exposure, churn risk, or launch investment). The framework extends beyond scalar means to general $M$-estimation targets, including regression and choice/conjoint models (Online Appendix~\ref{appsec:M-estimation}). Second, we develop a lightweight UCB-based allocation policy that learns where AI is unreliable online and directs human validation where it has the highest marginal value, requiring no separate pilot study. We prove that its allocation regret relative to the Neyman oracle vanishes as the budget grows, and we validate it on both synthetic data and a real digital-twin survey: our policy closes most of the gap between uniform allocation and the oracle, shrinking uniform's 10--12\% efficiency loss to 2--6\%. This means firms can reallocate roughly 6--8\% of their human-validation budget to other tasks without sacrificing estimation accuracy. Together, these results show \emph{when} adaptive validation creates value: not simply when AI is accurate on average, but when its reliability is heterogeneous across the portfolio of tasks.}

\subsection{Related Literature}
\label{sec:related}

\textbf{\rev{AI-enabled analytics and human validation.}} \rev{A growing literature studies how LLMs generate, code, or simulate analytics signals (survey responses, open-ended feedback, reviews, and choice predictions)} \citep{argyle2023out, brand2023using, ye2025lola, wang2024market, li2024frontiers, peng2025mega, toubia2025database, ziems2024can, motoki2024more, brucks2023prompt}. \rev{LLM-augmented surveys are one prominent instance, but the common statistical structure is broader: an AI system provides a low-cost surrogate outcome while human labels provide costly ground truth.} Several frameworks have been proposed to combine human labels with LLM predictions, including prediction-powered inference \citep[PPI;][]{angelopoulos2023prediction, angelopoulos2024ppiplus, mozer2026ppi, ji2025predictions, vafa2025estimating}, fine-tuning-then-rectification \citep{wang2025finetune}, and mixed human--LLM designs \citep{broska2025mixed, yin2026synthetic, huang2026many}. These studies focus on how to combine human and LLM data for a given question. Our work addresses the complementary \emph{across-question} allocation decision. Classical Neyman allocation \citep{neyman1934two} assigns samples to strata in proportion to their variance-to-cost ratio, and a rich literature on adaptive survey design shares the philosophy of directing scarce respondent effort where it is most informative \citep[e.g.,][]{toubia2003fast, raghunathan1995split}. In LLM-augmented surveys, Neyman allocation requires the variance of the \textcolor{softblue}{PPI++ tuned residual} for each question, which is unknown before data collection. Our work learns these \textcolor{softblue}{tuned residual variances} online with formal regret guarantees. \rev{Complementary to our estimation-stage allocation, \citet{gui2025leveraging} use LLM-generated covariates to improve \emph{design-stage} stratification, reducing treatment-effect variance; our framework instead allocates human validation across units once they are defined.}

\textbf{Online learning and bandit algorithms.} Our problem of sequentially allocating a budget across estimation targets under uncertainty connects to the multi-armed bandit literature \citep{auer2002finite, lattimore2020bandit} and its resource-constrained variants \citep[bandits with knapsacks;][]{badanidiyuru2018bandits, agrawal2016linear}. However, these formulations target cumulative reward maximization, whereas our objective is to minimize the terminal aggregate estimation error across a portfolio of estimands.
\rev{We adopt a terminal-MSE-gap regret definition similar to that of \citet{carpentier2015adaptive}, who model stratified sampling as a bandit problem, and \citet{aznag2023active}, who study active allocation for multi-group mean estimation. This multi-group mean estimation problem is the closest methodological precedent: in the special case of equal weights, equal costs, a fixed $\lambda{=}1$, and directly observed group outcomes, our objective reduces to the same Neyman-allocation structure. Our setting differs in the object being learned and in the operational context. Each costly observation is a paired human--LLM response; the allocation-relevant variance is the optimized PPI++ residual variance rather than a raw group variance; the tuning parameter $\lambda_q^*$ and the residual difficulty $A_q$ are learned jointly online; and the allocation must accommodate question importance and heterogeneous human-labeling costs. The methodological contribution is therefore to bring variance-adaptive allocation into the generative-AI human-validation setting, where the allocation-relevant variance is a tuned PPI++ residual learned from paired human--LLM observations under question weights and heterogeneous costs, and to demonstrate this allocation logic in real market-research workflows.} In a related vein, \citet{li2026asymptotically} study sequential hypothesis testing with heterogeneous LLMs, and \citet{simchi2025multi} study the trade-off between regret minimization and statistical inference in bandit problems.

\textbf{\rev{Human-in-the-loop AI operations.}} More broadly, our problem connects to the growing literature on \rev{human-in-the-loop AI operations: how firms should direct scarce human oversight across AI-generated outputs when AI reliability is heterogeneous and uncertain.} \citet{fugener2026roles} study when AI should automate tasks versus augment human judgment, showing that the optimal division depends on the type of complementarity between human and AI capabilities. \citet{dissorbo2025warnings} study how to direct human attention to AI predictions most likely to be unreliable, using warnings and endorsements to improve human-AI collaboration. \rev{\citet{poulidis2025action} compare action recommendations with attention-directing signals as alternative ways for AI to assist human decisions.} Related work includes fairness in human-AI collaboration \citep{ge2023rethinking}. \rev{This literature largely asks \emph{whether} and \emph{how} a human should engage with a given AI output; we study the complementary upstream allocation decision: given many tasks and a fixed human-validation budget, \emph{where} should scarce oversight be spent when task-level AI reliability is unknown?} Our work contributes to this literature by providing an online algorithm that learns where human oversight is most needed and allocates effort accordingly, with formal performance guarantees.

\noindent\textbf{Organization of the paper.} The remainder of this paper is organized as follows. Section~\ref{sec:problem_formulation} formalizes the problem \textcolor{softblue}{under PPI++}. Section~\ref{sec:algorithm} presents our UCB-based online learning policy and its regret guarantee. Sections~\ref{sec:synthetic} and~\ref{sec:digital_twin} validate the approach on synthetic and real survey data, respectively. Section~\ref{sec:conclusion} concludes. Section~\ref{appsec:M-estimation} of the e-companion extends the allocation framework to general $M$-estimation targets.

\section{Problem Formulation}
\label{sec:problem_formulation}

\rev{In the following, we formalize a general AI-augmented human-validation allocation problem and use LLM-augmented surveys as the running example.} We first introduce the setting in Section~\ref{subsec:survey_setting}. We then review prediction-powered inference  \textcolor{softblue}{(PPI) and its tuned PPI++ form}, highlighting the variance of the  \textcolor{softblue}{tuned LLM rectification residual}, called rectification difficulty, that governs estimation variance in Section~\ref{subsec:ppi_and_difficulty}. Last, we frame our problem as an online learning problem, in which rectification difficulties are unknown and must be learned sequentially, and define regret relative to the oracle in Section~\ref{subsec:online_learning_pf}.

\subsection{\rev{AI-Augmented Human-Validation Setting}}
\label{subsec:survey_setting}

\rev{Consider a set of allocation units $\mathcal{T}=\{1,\ldots,Q\}$. A unit may be a survey question, product concept, customer segment, service-audit category, content-evaluation task, demand-sensing stream, or question module.} For each unit $q\in\mathcal{T}$ and instance $i$, let $X_{q,i}\in\mathcal X$ denote the observable context \rev{(e.g., a customer profile, review text, chat transcript, product concept, creative, or survey prompt)}, and let $Y_{q,i}\in\mathcal Y$ denote the costly human-provided ground truth \rev{(e.g., a response, label, audit score, verification judgment, or choice)}. \rev{The label $Y_{q,i}$ is itself a noisy draw, varying across and within labelers, so multiple labels are averaged per unit, and its variance net of the AI signal is the rectification difficulty defined below.} Within each unit, observations are i.i.d.\ from a unit-specific population distribution:
$$
(X_{q,i},Y_{q,i}) \overset{\mathrm{i.i.d.}}{\sim} \mathcal P_q.
$$
We write $\cP^X_q$ for the marginal covariate distribution of $X_q$ and $\cP_q(\cdot\mid X)$ for the conditional response distribution, so that $\cP_q = \cP^X_q \otimes \cP_q(\cdot\mid X)$.

\rev{The marginal distribution $\cP_q^X$ is the \emph{target} respondent population for unit $q$ (the full survey population, a panel, or a prespecified customer segment). We require that both the labeled human--LLM pairs and the LLM-only predictions are generated from covariates drawn from the \emph{same} target marginal distribution $\cP_q^X$; the LLM-only pool may be far larger than the labeled sample and need not contain the same individuals. This can be enforced operationally by drawing respondent profiles from the same sampling frame, panel, or prespecified customer segment before collecting human responses or querying the LLM. It is satisfied automatically in our digital-twin data, where the LLM is queried with each human respondent's own profile. If instead the LLM-only covariates were drawn from a different population, the PPI++ correction could be biased unless reweighting or other covariate-shift corrections are introduced \citep{angelopoulos2023prediction}; such distribution-shift extensions are orthogonal to the allocation problem we study and lie outside our baseline model. For a subgroup target $G$, $\cP_q^X$ is the covariate distribution conditional on $G$, so}
\[
\theta_q^*
=
\bE_{X_q\sim \cP_q^X}\!\left[\bE(Y_q\mid X_q)\right]
=
\bE(Y_q\mid X_q\in G)
\]
\rev{is the corresponding segment mean.}

\textcolor{softblue}{For brevity, we consider scalar mean responses and write the estimand as the target-population mean
$\theta_q^* := \bE_{\mathcal P_q}[Y_q]$, where the relevant population is encoded by the marginal covariate distribution $\cP_q^X$. Thus changing $\cP_q^X$ from the full respondent population to a subgroup distribution changes $\theta_q^*$ from an aggregate mean to the corresponding subgroup mean, without changing the allocation formulation.}
\rev{Given importance weights $\{w_q\}_{q=1}^Q$, reflecting the relative economic value of estimation accuracy for each unit (e.g., margin or shortage-cost exposure in demand sensing, churn risk in service monitoring, launch investment in product testing, or strategic priority in survey design), the goal is to choose sample sizes $\{n_q\}_{q=1}^Q$ to minimize the weighted aggregate MSE:}
$$
\min_{\{n_q\}} \sum_{q=1}^Q w_q\,\bE\!\left[(\hat\theta_q-\theta_q^*)^2\right]
\quad \text{s.t.}\quad
\sum_{q=1}^Q c_q\,n_q \le B.
$$
\rev{This objective has a decision-theoretic foundation: when the firm's downstream decision for unit $q$ depends on $\hat\theta_q$ and the resulting loss is smooth, a second-order Taylor expansion around the truth shows that expected decision regret is proportional to $\sum_q w_q\,\bE[(\hat\theta_q - \theta_q^*)^2]$, where $w_q$ reflects the curvature of unit-level decision loss at the optimum. Minimizing weighted MSE thus serves as a local quadratic approximation to minimizing total decision regret across the portfolio of insight tasks.}

The per-label cost for unit $q$ is $c_q>0$, and the total human-validation budget is $B$. \rev{Throughout, $q$ is the unit to which human validation is allocated; Table~\ref{tab:applications} lists examples beyond the survey setting used in our empirical sections.} The framework extends in two directions. First, the allocation unit can be a \emph{question module} rather than a single question: many surveys group questions into thematic modules and assign each respondent to one or more modules, as in split questionnaire designs \citep{raghunathan1995split}. Section~\ref{subsec:dt_module} validates module-level allocation on real data. Second, the estimand can be vector-valued (e.g., regression coefficients or conjoint partworths), with the scalar difficulty derived from a sandwich covariance; \rev{Section~\ref{subsec:m_estimation_main} develops this $M$-estimation extension, with the full derivation and a choice-model experiment in Section~\ref{appsec:M-estimation} of the e-companion.}

\begin{table}[ht]
\color{revcolor}
\centering
\footnotesize
\caption{Examples of AI-enabled human-validation allocation problems.}
\label{tab:applications}
\renewcommand{\arraystretch}{1.3}
\begin{tabular}{p{0.19\linewidth}p{0.17\linewidth}p{0.21\linewidth}p{0.17\linewidth}p{0.15\linewidth}}
\toprule
Application & Allocation unit $q$ & AI prediction layer & Human ground truth & Estimand \\
\midrule
Demand sensing from reviews/chat logs & Product category, region, issue type & LLM-coded sentiment, issue, or demand signal & Human audit label & Mean, proportion \\
\addlinespace
Service-quality monitoring & Queue, agent team, issue class & AI quality score or escalation flag & Human QA audit & Defect or recovery-risk rate \\
\addlinespace
Product / concept testing & Concept, segment, choice task & LLM-predicted rating or choice & Human rating or choice & Mean preference, \emph{MNL/conjoint partworths} \\
\addlinespace
Content / promotion evaluation & Creative, campaign, segment & LLM persuasiveness or brand-safety score & Human evaluation or follow-up & Mean score, \emph{regression coefficients} \\
\addlinespace
Survey / digital-twin study \emph{(this paper)} & Question, module, segment & LLM-predicted response & Human survey response & Mean, category probability \\
\bottomrule
\end{tabular}
\begin{tablenotes}
\footnotesize
\item Notes: In each instance the AI signal is cheap but of uneven, unknown reliability, human ground truth is costly and noisy, and the decision is where to direct limited human validation. Analytics at the operations--marketing interface is the focus; the same structure (cheap imperfect AI signal, costly noisy human label, $A_q/n_q$ uncertainty) applies more broadly to human-in-the-loop AI validation such as content moderation, AI-output auditing, and fraud/quality review. Rows with vector-valued estimands (italicized) are handled by the $M$-estimation extension (Section~\ref{subsec:m_estimation_main}).
\end{tablenotes}
\end{table}
\renewcommand{\arraystretch}{1.0}

\subsection{\textcolor{softblue}{Prediction-Powered Inference (PPI and PPI++)} and Rectification Difficulty}
\label{subsec:ppi_and_difficulty}

An LLM provides a low-cost predictor $f:\mathcal X\to\mathcal Y$\footnote{\textcolor{softblue}{The notation $f(X)$ is a shorthand for the LLM prediction generated under the chosen prompting and decoding protocol. It does not require the prediction to be deterministic in $X$: if the LLM call has additional randomness, one may write $f(X,\xi)$, and all expectations, variances, and covariances involving $Y^{\mathrm{LLM}}$ are taken over both $X$ and this LLM randomness. We suppress $\xi$ for brevity.}}, yielding a surrogate prediction $Y^{\mathrm{LLM}}_{q,i} := f(X_{q,i})$.  \textcolor{softblue}{For each question $q$, the researcher has access to two data sources. The labeled sample consists of paired observations $(X_{q,i},Y_{q,i},Y^{\mathrm{LLM}}_{q,i})$, where $X_{q,i}\sim\cP_q^X$, $Y_{q,i}\sim\cP_q(\cdot\mid X_{q,i})$, and the LLM is queried on the same covariates, prompt, or respondent profile so that $Y^{\mathrm{LLM}}_{q,i}=f(X_{q,i})$. The LLM-only pool consists of LLM predictions generated from covariates drawn from the same target marginal distribution $\cP_q^X$, without collecting human responses. This pool need not be one-to-one matched with the labeled respondents, and it is only required to share the same target marginal covariate distribution $\cP_q^X$. Examples include digital-twin surveys, where the LLM is queried using the same respondent profile as the human respondent, and market research panels, where sampled consumer profiles are used both to collect human responses and to prompt the LLM. We write}
\[
{\color{softblue}
\bar Y_{q,n_q}:=\frac{1}{n_q}\sum_{i=1}^{n_q}Y_{q,i},\qquad
\bar Y^{\mathrm{LLM}}_{q,n_q}:=\frac{1}{n_q}\sum_{i=1}^{n_q}Y^{\mathrm{LLM}}_{q,i},\qquad
\bar Y^{\mathrm{LLM}}_{q,m_q}:=\frac{1}{m_q}\sum_{j=1}^{m_q}Y^{\mathrm{LLM}}_{q,j}.}
\]
 \textcolor{softblue}{We first recall the original PPI construction and then introduce its tuned PPI++ form.}

\textcolor{softblue}{Prediction-powered inference (PPI) \citep{angelopoulos2023prediction} corrects the LLM-only mean using the discrepancy observed in the labeled sample. For mean estimation, the PPI estimator is}
\begin{equation}
\label{eq:ppi_estimator_pf}
\color{softblue}
\hat{\theta}^{\mathrm{PPI}}_q
=
\bar Y_{q,n_q}
+
\left(\bar Y^{\mathrm{LLM}}_{q,m_q}-\bar Y^{\mathrm{LLM}}_{q,n_q}\right)
=
\bar Y^{\mathrm{LLM}}_{q,m_q}
+
\frac{1}{n_q}\sum_{i=1}^{n_q}\left(Y_{q,i}-Y^{\mathrm{LLM}}_{q,i}\right).
\end{equation}
\textcolor{softblue}{The correction term removes LLM bias because the labeled-sample LLM mean and the LLM-only mean target the same population expectation. Thus the labeled-sample uncertainty in PPI is governed by the residual $Y_q-Y_q^{\mathrm{LLM}}$.}

\textcolor{softblue}{PPI++ \citep{angelopoulos2024ppiplus} generalizes this construction by allowing the LLM correction to be weighted.} For any tuning parameter $\lambda\in[0,1]$, it estimates $\theta_q^*=\bE[Y_q]$ by
\begin{equation}
\label{eq:ppipp_estimator_pf}
\hat{\theta}^{\mathrm{PPI++}}_q(\lambda)
=
\bar Y_{q,n_q}
+
\lambda\left(\bar Y^{\mathrm{LLM}}_{q,m_q}-\bar Y^{\mathrm{LLM}}_{q,n_q}\right)
=
\lambda\bar Y^{\mathrm{LLM}}_{q,m_q}
+
\frac{1}{n_q}\sum_{i=1}^{n_q}\left(Y_{q,i}-\lambda Y^{\mathrm{LLM}}_{q,i}\right),
\end{equation}
 The first representation shows the human sample mean plus a prediction-powered correction; the second representation shows that the labeled-sample uncertainty is governed by the tuned residual
\[
\tilde Y_q(\lambda):=Y_q-\lambda Y_q^{\mathrm{LLM}}.
\]
For any fixed $\lambda$, Eq.~\eqref{eq:ppipp_estimator_pf} is unbiased because the LLM-only mean and the labeled-sample LLM mean have the same population expectation.

Since the labeled and synthetic samples are independent, the variance decomposes as
\begin{equation}
\label{eq:variance_decomp_pf}
\Var\!\big(\hat{\theta}^{\mathrm{PPI++}}_q(\lambda)\big)
=
\frac{\lambda^2}{m_q}\Var\!\big(Y^{\mathrm{LLM}}_q\big)
\;+\;
\frac{1}{n_q}\Var\!\big(Y_q-\lambda Y^{\mathrm{LLM}}_q\big).
\end{equation}

\textcolor{softblue}{For the labeled-sample component that governs allocation in the SDR regime below, the residual-variance-optimal PPI++ tuning parameter is the clipped regression coefficient}
\begin{equation}
\label{eq:lambda_star_pf}
\lambda_q^*
:=
\Pi_{[0,1]}\!\left(\frac{\Cov(Y_q,Y^{\mathrm{LLM}}_q)}{\Var(Y^{\mathrm{LLM}}_q)}\right).
\end{equation}
The \textit{rectification difficulty} of question $q$ is the tuned residual variance
\begin{equation}
\label{eq:tuned_difficulty_pf}
A_q
:=
A_q(\lambda_q^*)
:=
\Var\!\big(\tilde Y_q(\lambda_q^*)\big)
=
\Var\!\big(Y_q-\lambda_q^*Y^{\mathrm{LLM}}_q\big).
\end{equation}

\rev{Operationally, $A_q$ is the irreducible noise of human ground truth for unit $q$, across and within labelers, that the AI signal cannot remove; it is small where AI tracks humans well and large where it does not. Because human labeling is noisy in every human-in-the-loop application, this interpretation, and the allocation logic built on it, transfers verbatim across the instances in Table~\ref{tab:applications}.}

\textcolor{softblue}{PPI is therefore nested within PPI++:} \textcolor{softblue}{$\lambda=1$ recovers the original PPI estimator in Eq.~\eqref{eq:ppi_estimator_pf}, while $\lambda=0$ recovers the human-only sample mean. PPI++ chooses the allocation-relevant difficulty by minimizing the labeled-sample residual variance over $\lambda\in[0,1]$, so $A_q=\min_{\lambda\in[0,1]}\Var(Y_q-\lambda Y_q^{\mathrm{LLM}})$ under the clipped population optimum above.}

\rev{\noindent\textbf{Why a point predictor, and where calibration fits.} The LLM prediction $f(X)$ enters our formulation only as a \emph{control variate}: the coefficient $\lambda_q^*$ is the variance-minimizing control-variate weight \citep{angelopoulos2024ppiplus}. At the population optimum in the SDR benchmark, PPI++ weakly reduces the labeled-sample variance relative to the human-only estimator, since $\lambda{=}0$ is always feasible. The estimand $\theta_q^*$ and the unbiasedness of the correction do not require the LLM response distribution to be correct; richer LLM use only changes the predictor $f$. In particular, when the LLM is queried multiple times per profile, the variance-optimal predictor is the conditional mean $\bE[f(X,\xi)\mid X]$, and averaging draws only \emph{lowers} $A_q$. We treat the LLM as a fixed, pre-trained predictor (a frozen off-the-shelf model in our experiments), so the standard PPI++ guarantees apply with no sample splitting; were $f$ fine-tuned on the same human labels, one would instead use cross-fitting \citep{zrnic2024cross}.}

\rev{\noindent Calibration is complementary to this allocation problem rather than a substitute for it. A calibrated, fine-tuned, ensembled, or repeatedly queried LLM simply defines a different prediction protocol and can reduce the residual difficulty $A_q$. Our problem begins after this protocol is fixed: given many target units and a limited human-validation budget, how should the firm allocate paired human labels when the resulting $A_q$ values are unknown? Improving the predictor and allocating scarce human validation across units are thus orthogonal levers, and our framework addresses the second.}
\rev{Because LLM-only predictions can be generated cheaply for a large pool of target-sampled covariate profiles,} we work in the \textit{synthetic-data-rich} (SDR) regime and set $m_q\to\infty$ for theoretical analysis. \textcolor{softblue}{This isolates the human-label allocation problem:} the synthetic pool size $m_q$ is not a decision variable and can be chosen independently of the human-label allocation $\{n_q\}$. As $m_q\to\infty$, the first term in Eq.~\eqref{eq:variance_decomp_pf} vanishes and the tuned PPI++ estimation variance reduces to
$$
\Var\!\big(\hat\theta^{\mathrm{PPI++}}_q(\lambda_q^*)\big) = \frac{A_q}{n_q},
$$
which implies that optimal survey allocation is governed by $\{A_q\}$.

\subsection{Online Learning Framework and Regret}
\label{subsec:online_learning_pf}

We now frame the budget allocation problem as an online learning problem. Under the SDR PPI++ benchmark in Eq.~\eqref{eq:tuned_difficulty_pf}, the design objective from Section~\ref{subsec:survey_setting} reduces to minimizing $\sum_q w_q \Var(\hat\theta^{\mathrm{PPI++}}_q(\lambda_q^*)) = \sum_q w_q A_q / n_q$. If the rectification difficulties $\{A_q\}$ were known, this becomes
$$
\min_{\{n_q\}} \sum_{q=1}^Q w_q \frac{A_q}{n_q}
\quad \text{s.t.}\quad
\sum_{q=1}^Q c_q n_q \le B.
$$
The optimal solution follows a Neyman-type square-root allocation rule:
\begin{equation}
\label{eq:offline_oracle_pf}
n_q^*
=
\frac{B}{\sum_{j=1}^Q \sqrt{w_j A_j c_j}}
\sqrt{\frac{w_q A_q}{c_q}},
\qquad q\in\mathcal T.
\end{equation}
Thus, questions that are more important (large $w_q$), harder to rectify (large $A_q$), or cheaper (small $c_q$) receive more human labels. We refer to Eq.~\eqref{eq:offline_oracle_pf} as the oracle allocation.

However, this oracle is not directly implementable because it requires knowledge of $\{A_q\}$. Since \textcolor{softblue}{$A_q=\Var(Y_q-\lambda_q^*Y^{\mathrm{LLM}}_q)$ and $\lambda_q^*$ is itself unknown}, the difficulty depends on the joint distribution of human responses and LLM predictions, is unknown before any human data are collected, and varies across questions, populations, and LLM models. One could run a pilot survey to estimate $\{A_q\}$ before committing to an allocation, but this requires choosing how much of the budget to spend on the pilot versus the main study, and pilot estimates from a small sample may be unreliable. This motivates an online formulation that learns $\{A_q\}$ while allocating the budget, avoiding the need for a separate pilot phase.

We model data collection as a sequential allocation process. At each step $t$, a policy selects a question $q_t\in\mathcal T$ based on past observations, collects one additional human response for that question (paying cost $c_{q_t}$), queries the LLM on the same covariates, and obtains one paired observation $(Y_{q_t,t},Y^{\mathrm{LLM}}_{q_t,t})$.  \textcolor{softblue}{Here the residual is tuned online as $\tilde Y_{q_t,t}(\hat\lambda_{q_t,t})=Y_{q_t,t}-\hat\lambda_{q_t,t}Y^{\mathrm{LLM}}_{q_t,t}$, where $\hat\lambda_{q_t,t}$ is estimated from the paired observations collected so far.} Let $n_{q,t}$ be the number of human--LLM pairs collected for question $q$ up to time $t$, and let $n_{q,B}$ be the final allocation to question $q$ when the algorithm terminates.

An adaptive policy $\pi=\{\pi_t\}$ maps the history $h_{t-1}=\{(q_s,Y_{q_s,s},Y^{\mathrm{LLM}}_{q_s,s})\}_{s=1}^{t-1}$ to the next question $q_t$. Because $\{A_q\}$ are unknown, the policy faces an exploration--exploitation trade-off: sampling a question to reduce uncertainty in its difficulty estimate versus allocating budget to questions that appear most impactful under current estimates.

We measure performance by regret, which is the gap between the expected terminal objective achieved by $\pi$ and that of the oracle allocation:
$$
\mathcal R(B)
:=
\bE_{\pi}\!\left[\sum_{q=1}^Q w_q\frac{A_q}{n_{q,B}}\right]
-
\sum_{q=1}^Q w_q\frac{A_q}{n_q^*}.
$$
The goal is to design an online allocation policy with \textcolor{softblue}{vanishing terminal} regret as $B$ grows. The next section develops a UCB-based algorithm that learns $\{A_q\}$ from \textcolor{softblue}{PPI++ tuned residuals} and achieves $\mathcal R(B)=O(\ln B/B^2)$.

We note that the LLM augmentation and the adaptive allocation address complementary aspects of the problem. \textcolor{softblue}{In the PPI++ setting, tuning determines the residual difficulty $A_q$ for each question, while adaptive allocation learns these unknown difficulties and assigns human labels accordingly.}  \textcolor{softblue}{The preceding PPI discussion is only to clarify the nesting relationship:} \textcolor{softblue}{PPI is the $\lambda=1$ case inside PPI++, while the human-only sample mean is the $\lambda=0$ case. From this point onward, the paper formulates and analyzes the allocation problem under the PPI++ setting, and $A_q$ always denotes the tuned PPI++ difficulty $A_q^{\mathrm{PPI++}}=\Var(Y_q-\lambda_q^*Y_q^{\mathrm{LLM}})$. The adaptive allocation problem then uses the single objective $\sum_q w_q A_q/n_q$, with heterogeneity in these tuned difficulties driving the value of online learning.}

\subsection{\rev{Beyond Means: General $M$-Estimation Targets}}
\label{subsec:m_estimation_main}

\rev{The scalar mean developed above is our running formulation and the cleanest way to present the allocation logic, but many analytics targets are decision models (category probabilities, demand regressions, service-quality response models, conjoint/choice partworths) written as $M$-estimators. For unit $q$, let $\bm\theta_q^*\in\argmin_{\bm\theta}\bE[\ell_q(X_q,Y_q;\bm\theta)]$ with score $\bm\psi_q=\nabla_{\bm\theta}\ell_q$, and AI surrogate score $\bm\psi_q^{\mathrm{AI}}(X;\bm\theta)=\bm\psi_q(X,f_q(X);\bm\theta)$. The PPI++ estimating equation corrects the human score by the AI score:}
\begin{equation}
\label{eq:ppipp_mest_main}
\bm 0=\frac{1}{n_q}\sum_{i}\bm\psi_q(X_{q,i},Y_{q,i};\bm\theta)+\lambda_q\Big(\bE[\bm\psi_q^{\mathrm{AI}}(X_q;\bm\theta)]-\frac{1}{n_q}\sum_i\bm\psi_q^{\mathrm{AI}}(X_{q,i};\bm\theta)\Big).
\end{equation}
\rev{Under standard regularity and the synthetic-data-rich benchmark, $\sqrt{n_q}(\widehat{\bm\theta}_q-\bm\theta_q^*)\Rightarrow \mathcal N(\bm 0,\bm H_q^{-1}\bm V_q^{\Delta}(\lambda_q)\bm H_q^{-\top})$, where $\bm H_q$ is the population Hessian and $\bm V_q^{\Delta}(\lambda_q)=\Var(\bm\psi_q(X_q,Y_q;\bm\theta_q^*)-\lambda_q\bm\psi_q^{\mathrm{AI}}(X_q;\bm\theta_q^*))$. The $1/n_q$ structure is preserved: for any scalar design criterion $g$ (a weighted trace for A-optimality, the determinant for D-optimality), define the scalar difficulty $A_q=\min_{\lambda\in[0,1]}g(\bm H_q^{-1}\bm V_q^{\Delta}(\lambda)\bm H_q^{-\top})$. The oracle again minimizes $\sum_q w_q A_q/n_q$ under the validation budget, so the adaptive policy of Section~\ref{sec:algorithm} applies unchanged once rectification difficulty is computed from a scalarized PPI++ sandwich covariance instead of a scalar residual variance. Section~\ref{appsec:M-estimation} of the e-companion gives the full derivation and illustrates it on a choice (MNL/conjoint) model.}

\rev{The regret theorem in Section~\ref{sec:algorithm} is stated for scalar means, where finite-sample concentration is transparent. For general $M$-estimators the oracle structure and UCB implementation are identical once confidence radii for the scalar difficulty estimates are available; establishing those radii is left to future work.}

\section{Algorithm Design and Regret Analysis}
\label{sec:algorithm}

We first motivate the algorithm design through a marginal-efficiency interpretation of the oracle allocation in Section~\ref{subsec:oracle_marginal}, then construct the UCB index that replaces unknown difficulties with confidence-adjusted estimates in Section~\ref{subsec:ucb_index}, and present the full algorithm in Section~\ref{subsec:algorithm_description}. Section~\ref{subsec:regret_analysis} establishes the regret guarantee.

\subsection{Oracle Case with Known $A_q$: Marginal-Efficiency Interpretation}
\label{subsec:oracle_marginal}
Recall from Section~\ref{sec:problem_formulation} that \textcolor{softblue}{$\Var(\hat\theta^{\mathrm{PPI++}}_q(\lambda_q^*))= A_q/n_q$ with $A_q=\Var(Y_q-\lambda_q^*Y_q^{\mathrm{LLM}})$}. To motivate our adaptive approach, we first analyze the case where the rectification difficulties $A_q$ are known, but the budget is allocated sequentially. We frame this through the lens of \textit{Marginal Efficiency}.

 To build intuition, consider two survey questions with $A_1=4$, $A_2=1$ and equal weights and costs ($w_q=c_q=1$). The oracle allocates $n_1^*=2n_2^*$ (proportional to $\sqrt{A_q}$). Suppose both questions currently have $n_q=10$ samples. The marginal efficiency of adding one more sample to question~1 is $A_1/n_1^2 = 4/100 = 0.04$, versus $A_2/n_2^2 = 1/100 = 0.01$ for question~2. The greedy rule correctly directs the next sample to the harder question.

More generally, consider the current state at round \( t \), where each unit \( q \) has been allocated \( n_{q,t} \) samples. If we were to allocate one additional sample to unit \( q \), the objective function \( \sum_q w_q \frac{A_q}{n_q} \) would decrease. The \textit{Marginal Variance Reduction} is given by the negative partial derivative with respect to \( n_q \): \[ -\frac{\partial}{\partial n_q} \left( w_q \frac{A_q}{n_q} \right) = \frac{w_q A_q}{n_q^2} \]

Since each sample for unit \( q \) costs \( c_q \), we define the \textit{Marginal Efficiency Index} \( \mathcal{I}_{q,t} \) as the variance reduction per unit of currency spent: \[ \mathcal{I}_{q,t}^* = \frac{\text{Marginal Variance Reduction}}{\text{Marginal Cost}} = \frac{w_q A_q}{c_q n_{q,t}^2}. \] A policy that greedily selects $q = \arg\max \mathcal{I}_{q,t}^*$ at each step converges to the optimal oracle allocation $n_q^*$, because the greedy rule drives the system toward a state where the marginal efficiency $w_q A_q/(c_q n_q^2)$ is equalized across all $q$, which is precisely the first-order condition of the Lagrangian of the static optimization problem. \rev{Operationally, $\mathcal{I}_{q,t}^*$ measures the marginal value of one additional unit of human oversight per dollar spent on unit $q$, and the oracle allocation equalizes it across the portfolio of insight tasks.}

\subsection{UCB Index Construction}
\label{subsec:ucb_index}

In practice, $A_q$ is unknown. The key role of the LLM is to produce cheap paired predictions $f(X)$ so that each costly human label immediately yields information about the tuned residual $\tilde Y(\lambda)=Y-\lambda f(X)$.  \textcolor{softblue}{In the PPI++ construction used here, both the tuning coefficient and the tuned residual variance are learned online from the same paired observations.} For each unit $q$, let $n_{q,t}$ be the number of human labels collected up to round $t$, and define
\begin{equation}
\label{eq:lambda_hat_main}
\hat\lambda_{q,t}
:=
\Pi_{[0,1]}\!\left(
\frac{\widehat{\Cov}_t(Y_q,Y_q^{\mathrm{LLM}})}
{\widehat{\Var}_t(Y_q^{\mathrm{LLM}})}
\right),
\end{equation}
where $\widehat{\Cov}_t$ and $\widehat{\Var}_t$ are empirical centered moments based on the $n_{q,t}$ paired samples collected for unit $q$ \textcolor{softblue}{(using denominator $n_{q,t}$ in the covariance/variance ratio)}. \textcolor{softblue}{If the empirical variance in the denominator is zero, we set the ratio in Eq.~\eqref{eq:lambda_hat_main} to zero; this convention is immaterial on the high-probability event used in the analysis, where the denominator is bounded away from zero after initialization.} Let $\widehat A_{q,t}(\hat\lambda_{q,t})$ be the sample variance of the tuned residuals:
\begin{equation}
\label{eq:Ahat_ppipp_main}
\widehat{A}_{q,t}(\hat\lambda_{q,t})
=
\frac{1}{n_{q,t} - 1}
\sum_{i=1}^{n_{q,t}}
\left(
\tilde{Y}_{q,i}(\hat\lambda_{q,t})
-
\bar{\tilde{Y}}_{q,t}(\hat\lambda_{q,t})
\right)^2.
\end{equation}

A naive plug-in greedy approach using the empirical variance $\widehat{A}_{q,t}(\hat\lambda_{q,t})$ can under-sample a unit whose true $A_q$ is high but whose initial $\widehat{A}_{q,t}(\hat\lambda_{q,t})$ was accidentally low.

To address this, we use the Upper Confidence Bound (UCB) principle. The PPI++ confidence radius $\rho^{++}_{q,t}:=\rho^{++}_{q,n_{q,t}}$ controls both the sample-variance noise and the plug-in error from estimating $\lambda_q^*$ (defined formally in Eq.~\eqref{eq:rho_ppipp_step1_abs} in the e-companion). On the good event established in Lemma~\ref{lem:good_event_ppipp_abs},
\[
A_q \leq A^{\mathrm{UCB}}_{q,t}(\hat\lambda_{q,t}),
\qquad
A^{\mathrm{UCB}}_{q,t}(\hat\lambda_{q,t})
:=
\left(
\sqrt{\widehat A_{q,t}(\hat\lambda_{q,t})}
+
\rho^{++}_{q,t}
\right)^2 .
\]
We define the UCB index to prioritize units that either have high estimated difficulty (exploitation) or high estimation uncertainty (exploration): \[ \mathcal{I}_{q,t}^{\mathrm{UCB}} = \frac{w_q A^{\mathrm{UCB}}_{q,t}(\hat\lambda_{q,t})}{c_q n_{q,t}^2} . \]

 The confidence bound above contains the question-dependent $\widehat{A}_{q,t}(\hat\lambda_{q,t})$: when the LLM is accurate after tuning (small $\widehat{A}_{q,t}(\hat\lambda_{q,t})$), the confidence width shrinks faster, so ``easy'' units exit the exploration phase sooner and stop consuming budget. This variance-adaptive tightening allows the algorithm to concentrate its exploration budget on units with genuinely high uncertainty, echoing the structure of the Neyman rule that allocates proportionally to $\sqrt{A_q}$.

\subsection{Algorithm Description}
\label{subsec:algorithm_description}

The algorithm enforces an initialization of $K\ge2$ labels per unit to make $\widehat A_{q,t}(\hat\lambda_{q,t})$ well-defined, then repeatedly allocates the next human label to the unit with the largest UCB index. Each allocation uses the LLM on the same covariate/prompt to produce a paired prediction, refits the PPI++ tuning coefficient for that question, and updates the tuned residual variance. It then proceeds sequentially until the remaining budget falls below $\max_q c_q$.
\begin{algorithm}[ht]
\caption{UCB-Based Online Human-Validation Allocation with \textcolor{softblue}{PPI++}}
\label{alg:ucb-time-indexed}
\begin{algorithmic}[1]
\State \textbf{Input:} Budget $B$, costs $\{c_q\}_{q=1}^Q$, weights $\{w_q\}_{q=1}^Q$, range $R$, confidence $\delta$, initialization $K\ge2$, LLM predictor $f$.
\For{each unit $q \in \{1, \dots, Q\}$} \Comment{Initialization Phase}
    \State Sample $K$ initial covariates $\{X_{q,i}\}_{i=1}^K \sim \cP^X_q$.
    \State Collect paired samples $\{(Y_{q,i},Y^{\mathrm{LLM}}_{q,i})\}_{i=1}^K$ where $Y^{\mathrm{LLM}}_{q,i}=f(X_{q,i})$.
    \State Fit \textcolor{softblue}{$\hat\lambda_{q,QK}=\Pi_{[0,1]}\!\big(\widehat{\Cov}_{q,K}(Y_q,Y^{\mathrm{LLM}}_q)/\widehat{\Var}_{q,K}(Y^{\mathrm{LLM}}_q)\big)$}, where the empirical moments use the $K$ pairs for unit $q$.
    \State Recompute tuned residuals $\tilde Y_{q,i}(\hat\lambda_{q,QK})=Y_{q,i}-\hat\lambda_{q,QK}Y^{\mathrm{LLM}}_{q,i}$ for $i\le K$, set \textcolor{softblue}{$\widehat A_{q,QK}(\hat\lambda_{q,QK})=\widehat{\Var}_{q,K}(\tilde Y_q(\hat\lambda_{q,QK}))$} and $n_{q,QK}=K$.
\EndFor
 \State Set $T_{\max}=\left\lfloor B/\min_q c_q\right\rfloor$, $\delta_{q,t}= \delta/(QT_{\max})$, $t \gets QK$, and $B_t \gets B-\sum_{q=1}^Q c_q K$.
\For{$t = QK, QK{+}1, \dots$ until $B_t < \max_q c_q$} \Comment{UCB Selection Phase}
    \State \textbf{Selection through UCB Indices:}
    \State \quad For each $q$, compute $$\mathcal{I}_{q,t}^{\mathrm{UCB}} = \frac{w_q \cdot A_{q,t}^{\mathrm{UCB}}(\hat\lambda_{q,t})}{c_q n_{q,t}^2},\qquad
    A_{q,t}^{\mathrm{UCB}}(\hat\lambda_{q,t})=\left(\sqrt{\widehat A_{q,t}(\hat\lambda_{q,t})}+\rho^{++}_{q,t}\right)^2.$$
    \State \quad Select $q_t = \arg\max_{q} \mathcal{I}_{q,t}^{\mathrm{UCB}}$.
    \State \textbf{Query \& LLM Pair:}
    \State \quad {Sample} a new covariate $X_{q_t} \sim \cP^X_{q_t}$ from the target population.
    \State \quad Query human label $Y_{q_t}\sim\cP_{q_t}(\cdot\mid X_{q_t})$
    \State \quad {Generate} {LLM prediction} $Y_{q_t}^{\mathrm{LLM}} = f(X_{q_t})$ for the same $X_{q_t}$.

    \State \textbf{Update:}
    \State \quad Let $n_{q_t, t+1} = n_{q_t, t} + 1$ and $B_{t+1} = B_{t} - c_{q_t}$.
    \State \quad Refit \textcolor{softblue}{$\hat\lambda_{q_t,t+1}=\Pi_{[0,1]}\!\big(\widehat{\Cov}_{q_t,t+1}(Y_{q_t},Y^{\mathrm{LLM}}_{q_t})/\widehat{\Var}_{q_t,t+1}(Y^{\mathrm{LLM}}_{q_t})\big)$} using all $n_{q_t,t+1}$ pairs for unit $q_t$.
    \State \quad Recompute tuned residuals $\tilde Y_{q_t,i}(\hat\lambda_{q_t,t+1})=Y_{q_t,i}-\hat\lambda_{q_t,t+1}Y^{\mathrm{LLM}}_{q_t,i}$ for $i\le n_{q_t,t+1}$, then update $\widehat A_{q_t,t+1}(\hat\lambda_{q_t,t+1})$.
\EndFor
\State \textbf{Output:} For each $q$, report the  \textcolor{softblue}{PPI++} estimate $\hat\theta^{\mathrm{PPI++}}_q=\bar Y_{q,n_q}+\hat\lambda_{q,B}\big(\bar Y^{\mathrm{LLM}}_{q,m_q}-\bar Y^{\mathrm{LLM}}_{q,n_q}\big)$ with $m_q\gg n_q$, where $\hat\lambda_{q,B}$ is the last updated estimate of $\lambda_q$ for unit $q$ at termination.
\end{algorithmic}
\end{algorithm}
The adaptive phase stops once the remaining budget falls below $c_{\max}$, so every unit compared in the UCB $\arg\max$ is feasible throughout the phase. The unspent remainder $\mathrm{Rem}_B\in[0,c_{\max})$ is bounded and asymptotically negligible relative to $B$.

\rev{For exposition, the policy updates after each human label. In batched operations (daily audit queues, labeling or survey waves, multi-round concept tests), the same index is recomputed at batch boundaries to allocate the next batch, so real-time per-instance adaptation is not required.}

\subsection{Regret Analysis}
\label{subsec:regret_analysis}

We now state the main theoretical guarantee for Algorithm~\ref{alg:ucb-time-indexed}. Recall \textcolor{softblue}{$A_q:=A_q(\lambda_q^*)=\Var(Y_q-\lambda_q^*Y_q^{\mathrm{LLM}})$}. Let $n_{q,B}$ be the total number of human labels allocated to question $q$ when the algorithm stops under budget $B$. Define $c_{\min}:=\min_q c_q$, $c_{\max}:=\max_q c_q$, and $T_{\max}:=\lfloor B/c_{\min}\rfloor$. The oracle allocation is
$$
n_q^*=\frac{B}{\sum_{j=1}^Q \sqrt{w_jA_jc_j}}\sqrt{\frac{w_qA_q}{c_q}},\qquad \Lambda:=\sum_{j=1}^Q \sqrt{w_jA_jc_j},\qquad \beta_q:=\frac{n_q^*}{B}=\frac{\sqrt{w_qA_q/c_q}}{\Lambda}.
$$
The regret is defined as
\begin{equation}
\label{eq:regret_def}
\mathcal{R}(B):=\bE\!\left[\sum_{q=1}^Q \frac{w_qA_q}{n_{q,B}}\right]-\sum_{q=1}^Q \frac{w_qA_q}{n_q^*}.
\end{equation}

We impose two assumptions for the theoretical analysis of Algorithm~\ref{alg:ucb-time-indexed}. Throughout, we assume $w_q>0$ and $c_q>0$ for all $q$, and that the budget is large enough for initialization: $B\ge \sum_{q=1}^Q c_q K$.

\begin{assumption}
\label{ass:boundedpp}
For each $q\in[Q]$,
$$
Y_{q}\in[L_Y,U_Y],\qquad Y^{\mathrm{LLM}}_{q}\in[L_{Y^{\mathrm{LLM}}},U_{Y^{\mathrm{LLM}}}]\qquad\text{almost surely.}
$$
Define $R_Y:=U_Y-L_Y$, $R_{Y^{\mathrm{LLM}}}:=U_{Y^{\mathrm{LLM}}}-L_{Y^{\mathrm{LLM}}}$, $R:=R_Y+R_{Y^{\mathrm{LLM}}}$, $M_Y:=\max(|L_Y|,|U_Y|)$, and $M_{Y^{\mathrm{LLM}}}:=\max(|L_{Y^{\mathrm{LLM}}}|,|U_{Y^{\mathrm{LLM}}}|)$.
Then $\tilde Y_q(\lambda)$ has range at most $R$ for every $\lambda\in[0,1]$.
\end{assumption}

\begin{assumption}
\label{ass:nondeg_ppipp}
We assume (i) $A_{\min}:=\min_q A_q>0$, where $A_q:=\Var(Y_q-\lambda_q^*Y^{\mathrm{LLM}}_q)$, and (ii) for each $q\in[Q]$, $\Var(Y^{\mathrm{LLM}}_q)\ge V_{\min}^{\mathrm{LLM}}$ for some constant $V_{\min}^{\mathrm{LLM}}>0$.
\end{assumption}
Assumption~\ref{ass:boundedpp} requires outcomes and LLM predictions to be bounded, which ensures uniform concentration of the tuned residual variance over $\lambda\in[0,1]$. This holds whenever survey responses and LLM-coded predictions are bounded, as in Likert scales, binary choices, or bounded-range ratings.
Assumption~\ref{ass:nondeg_ppipp} requires that no question is perfectly predicted after tuning and that the LLM prediction has nonvanishing variance, which is needed for the plug-in tuning coefficient $\hat\lambda_{q,t}$ to concentrate around $\lambda_q^*$.

With these assumptions, we have the following theoretical guarantee for the regret of Algorithm~\ref{alg:ucb-time-indexed}.
\begin{theorem}
\label{thm:regret}
Suppose Assumptions~\ref{ass:boundedpp}--\ref{ass:nondeg_ppipp} hold. Run Algorithm~\ref{alg:ucb-time-indexed} with $K=\lceil K_0\ln(QT_{\max}/\delta)\rceil$, where $K_0$ depends only on the parameters in Assumptions~\ref{ass:boundedpp}--\ref{ass:nondeg_ppipp}. Let $V_{\max}:=\sum_{q=1}^Q w_qA_q/K$. Then for all sufficiently large $B$,
\begin{equation*}
\mathcal{R}(B)\le O\left(\frac{\ln(QT_{\max}/\delta)}{B^2}\right)+\delta V_{\max}.
\end{equation*}
In particular, choosing $\delta=B^{-2}$ yields $\mathcal{R}(B)=O\!\left(\frac{\ln B}{B^2}\right)$.
\end{theorem}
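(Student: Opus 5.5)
The plan follows the Carpentier--Munos / Aznag-type argument for variance-adaptive allocation, adapted to the tuned PPI++ residual. We split the expectation in \eqref{eq:regret_def} on a good event $\mathcal E$. On this event every confidence radius is valid simultaneously. Lemma~\ref{lem:good_event_ppipp_abs} asserts that with $\delta_{q,t}=\delta/(QT_{\max})$ and a union bound over $q$ and $t\le T_{\max}$, $\bP(\mathcal E)\ge 1-\delta$, and on $\mathcal E$ we have $|\sqrt{\widehat A_{q,t}(\hat\lambda_{q,t})}-\sqrt{A_q}|\le\rho^{++}_{q,t}$ with $\rho^{++}_{q,n}=O(\sqrt{\ln(QT_{\max}/\delta)/n})$. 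On $\mathcal E^c$ we use the crude bound $n_{q,B}\ge K$. This gives $\sum_q w_qA_q/n_{q,B}\le V_{\max}$, so the failure event contributes at most $\delta V_{\max}$ (dropping the negative oracle term). It therefore remains to bound the deterministic gap on $\mathcal E$ by $O(\ln(QT_{\max}/\delta)/B^2)$.

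The radius $\rho^{++}$ must handle the plug-in $\hat\lambda$. I would bound $\sup_{\lambda\in[0,1]}|\widehat A_{q}(\lambda)-A_q(\lambda)|$ using bounded-range (Hoeffding/Bernstein) concentration of the three empirical second moments $\widehat\Var(Y)$, $\widehat\Var(Y^{\mathrm{LLM}})$ and $\widehat\Cov$. This works because $A_q(\lambda)$ is a quadratic in $\lambda$ whose coefficients are exactly these moments. Assumption~\ref{ass:nondeg_ppipp}(ii) then keeps the denominator away from zero once $K\ge K_0\ln(QT_{\max}/\delta)$, which is the reason for the choice of $K$. The clipped ratio is $1$-Lipschitz after projection, and $A_q(\cdot)$ is minimized at $\lambda_q^*$, so $A_q(\hat\lambda)-A_q=O(|\hat\lambda-\lambda_q^*|^2)$ or $O(|\hat\lambda-\lambda_q^*|)$ in the clipped case. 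Either is absorbed in a radius of order $\sqrt{\ln/n}$ after passing to square roots with $A_{\min}>0$.

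On $\mathcal E$ the key step is an allocation bound $|n_{q,B}-n_q^*|=O(\sqrt{B\ln(QT_{\max}/\delta)})$. For the upper bound, consider the last time $t$ at which $q$ was selected. By the UCB argmax and validity of the bounds, for every $p$ we have $w_qA^{\mathrm{UCB}}_{q,t}/(c_qn_{q,t}^2)\ge w_pA_p/(c_pn_{p,t}^2)$. Moreover, $\sqrt{A^{\mathrm{UCB}}_{q,t}}\le\sqrt{A_q}+2\rho^{++}_{q,t}$. Taking square roots, $\sqrt{w_q/c_q}(\sqrt{A_q}+2\rho_{q,n_q})/n_{q}\ge\sqrt{w_pA_p/c_p}/n_{p}$. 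Summing $c_pn_p$ over $p$ against the budget identity $\sum_pc_pn_{p,B}=B-\mathrm{Rem}_B$ gives $n_{q,B}\le n_q^*(1+O(\rho_{q,n_q}/\sqrt{A_q}))+O(1)$. Since $n_q$ is of order $B$ (use $A_{\min}>0$ to get a first crude linear lower bound), this is $n_q^*+O(\sqrt{B\ln})$. The lower bound follows from the budget identity applied to the upper bounds of all other arms. We then Taylor-expand the objective $F(n)=\sum_qw_qA_q/n_q$ around $n^*$ on the budget hyperplane. Because $n^*$ minimizes $F$ subject to $\sum c_qn_q=B$, the first-order term $\sum_q\nabla_qF\,(n_{q,B}-n_q^*)$ equals $-\mu\sum_qc_q(n_{q,B}-n^*_q)=\mu\,\mathrm{Rem}_B$ with $\mu=\Lambda^2/B^2$, which is $O(1/B^2)$. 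The second-order term is $\sum_q w_qA_q(n_{q,B}-n_q^*)^2/n_q^{*3}=O(B\ln/B^3)=O(\ln/B^2)$. This is where the $\ln/B^2$ rate comes from: the deviation is of order square root and enters squared.

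The main obstacle is the allocation-deviation step, specifically obtaining $\sqrt{B}$-scale deviations rather than losing a factor of order $B^{1/2}$ to nonuniform sampling or to arms last pulled early. I would handle it by first proving a crude bound $n_{q,B}\ge cB$ on $\mathcal E$ for large $B$, using bounded radii and $A_{\min}>0$, so that every $\rho_{q,n_{q,B}}$ is $O(\sqrt{\ln/B})$. I would then also check that the last-pull time argument relates $n_{p,t}$ to $n_{p,B}$ correctly, noting that $n_{p,t}\le n_{p,B}$ gives the inequality in the needed direction. Finally, choosing $\delta=B^{-2}$ makes $\delta V_{\max}=O(B^{-2})$ and $\ln(QT_{\max}/\delta)=O(\ln B)$, which yields $\mathcal R(B)=O(\ln B/B^2)$.
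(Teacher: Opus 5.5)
Your proposal is correct and has the paper's skeleton: a good event whose failure costs $\delta V_{\max}$, an $O(\sqrt{B\ell_B})$ allocation gap, and a Taylor expansion whose first-order term telescopes to $(\Lambda^2/B^2)\mathrm{Rem}_B$. Your Step~II, however, runs in the mirror direction. The paper uses pigeonhole ($\sum_j c_j\beta_j=1$) to pick an arm $q_a$ with $n_{q_a,B}-K\ge\beta_{q_a}D_B$. At $q_a$'s last pull it compares every arm against $q_a$, which gives the \emph{lower} bound $n_{q,B}\ge n_q^*-\Delta_{q,B}$ for all $q$ at once; the upper bound then comes from the budget. You instead use each arm's own last pull and the $c_p$-weighted sum against $\sum_pc_pn_{p,B}=B-\mathrm{Rem}_B$ to get $n_{q,B}-1\le n_q^*\bigl(1+2\rho^{++}_{q,n_{q,B}-1}/\sqrt{A_q}\bigr)$, and take the lower bound from the budget. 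This is valid. What the pigeonhole buys the paper is a reference arm that automatically has $\Theta(B)$ samples. In your version, the ``crude linear lower bound'' you plan to prove first is unnecessary. If $n_{q,B}\le n_q^*$ there is nothing to show; otherwise $n_{q,B}-1\ge n_q^*-1=\Omega(B)$, so the radius in your inequality is already $O(\sqrt{\ell_B/B})$. A direct proof of the crude bound would essentially reproduce the paper's pigeonhole step. In Step~I, your uniform bound through the quadratic $\widehat{\Var}(Y)-2\lambda\widehat{\Cov}+\lambda^2\widehat{\Var}(Y^{\mathrm{LLM}})$ avoids the paper's $\lambda$-grid, the Maurer--Pontil union bound, and the $\sqrt{2}R_{Y^{\mathrm{LLM}}}/T_{\max}$ term. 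The price is a $1/\sqrt{A_{\min}}$ factor when passing to square roots, whereas Maurer--Pontil controls standard deviations directly. It also certifies your own radius rather than the algorithm's $\rho^{++}$, though your allocation argument only needs some valid radius of order $\sqrt{\ell_B/s}$. Your quadratic observation can be pushed further than either you or the paper use it. $\hat\lambda_{q,t}$ is exactly the minimizer over $[0,1]$ of the empirical quadratic, and $\lambda_q^*$ is the minimizer of the population one. Hence $|\sqrt{\widehat A_{q,t}(\hat\lambda_{q,t})}-\sqrt{A_q}|\le\sup_{\lambda\in[0,1]}|\tilde\sigma_{q,s}(\lambda)-\sigma_q(\lambda)|$ directly, and no concentration of $\hat\lambda$ is needed. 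Finally, index the good event by per-arm sample count $s$, as the paper does, rather than by adaptive time $t$, so that fixed-sample concentration applies; the union-bound size is unchanged.
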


The proof proceeds in three steps. We present the key constructions and results here; complete algebraic details are in Section~\ref{appsec:proof_thm1} of the e-companion.

\subsubsection*{Step~I: PPI++ Good Event and Optimism}

To control regret, we define a high-probability event under which the plug-in PPI++ variance estimate concentrates uniformly around the tuned population difficulty for all questions and all deterministic per-question sample sizes. The event is indexed by the deterministic count $s$, not by adaptive time $t$, which avoids optional-stopping complications. The main technical work is to combine (i) concentration of the plug-in coefficient $\hat\lambda$ around $\lambda_q^*$ and (ii) uniform concentration of the sample standard deviation over $\lambda\in[0,1]$. Lemma~\ref{lem:good_event_ppipp_abs} shows that this PPI++ good event holds with probability at least $1-\delta$.

\paragraph{Optimism.} On the PPI++ good event, the confidence radius $\rho^{++}_{q,t}$ satisfies
$$
\sqrt{A_q}\le \sqrt{\widehat A_{q,t}(\hat\lambda_{q,t})}+\rho^{++}_{q,t},
$$
and hence $A_q\le A^{\mathrm{UCB}}_{q,t}(\hat\lambda_{q,t})$. Therefore the true marginal-efficiency index is upper-bounded by the optimistic index:
$$
\mathcal{I}^*_{q,t} = \frac{w_q A_q}{c_q n_{q,t}^2} \le \mathcal{I}^{\mathrm{UCB}}_{q,t} := \frac{w_q A^{\mathrm{UCB}}_{q,t}(\hat\lambda_{q,t})}{c_q n_{q,t}^2}.
$$
This ensures that the algorithm never systematically under-samples a question whose true tuned difficulty is high.

\subsubsection*{Step~II: Allocation Gap Bound}

On the PPI++ good event, the UCB selection rule forces the realized allocation to track the oracle proportions. The key argument uses a pigeonhole construction: there exists a question $q_a$ whose allocation is at least proportional to its oracle share, and the UCB selection rule ensures that every other question $q$ satisfies $\mathcal{I}^*_{q,t}\le\mathcal{I}^{\mathrm{UCB}}_{q,t}\le\mathcal{I}^{\mathrm{UCB}}_{q_a,t}$. The PPI++ radius is of order $\sqrt{\ln(QT_{\max}/\delta)/n_{q,t}}$, so chaining this inequality (see Section~\ref{appsec:proof_thm1} of the e-companion for the full derivation) yields
\begin{equation}
\label{eq:allocation_gap_bound}
|n_{q,B} - n_q^*|
=
O\!\left(\sqrt{B \ln(QT_{\max}/\delta)}\right),\qquad \forall q\in[Q].
\end{equation}

\subsubsection*{Step~III: From Allocation Gaps to Regret}

We convert the allocation gaps into regret via a second-order Taylor expansion of $f_q(n)=w_qA_q/n$ around $n_q^*$. Summing over questions:
\begin{equation*}
\sum_{q=1}^Q\left[f_q(n_{q,B})-f_q(n_q^*)\right] = \underbrace{-\sum_{q=1}^Q\frac{w_qA_q}{(n_q^*)^2}(n_{q,B}-n_q^*)}_{\text{Term 1}} + \underbrace{\sum_{q=1}^Q \frac{w_qA_q}{\xi_q^3}(n_{q,B}-n_q^*)^2}_{\text{Term 2}}.
\end{equation*}
Term~1 telescopes by the optimality condition \textcolor{softblue}{$w_qA_q/(n_q^*)^2=(\Lambda^2/B^2)c_q$}, yielding \textcolor{softblue}{$\text{Term 1}=(\Lambda^2/B^2)\mathrm{Rem}_B$} where $\mathrm{Rem}_B\in[0,c_{\max})$ is the leftover budget, contributing $O(1/B^2)$. Term~2 is bounded using Eq.~\eqref{eq:allocation_gap_bound}: each squared gap is $O(B\ln(QT_{\max}/\delta))$, and the $1/\xi_q^3\approx 1/(n_q^*)^3\sim 1/B^3$ prefactor yields a total of $O(\ln(QT_{\max}/\delta)/B^2)$. Combining with $\bP((\mathcal{E}^{++})^c)\le\delta$ completes the proof.

\section{Synthetic Data Analysis}
\label{sec:synthetic}

Next, we evaluate Algorithm~\ref{alg:ucb-time-indexed} on a controlled synthetic environment. All experiments use 200 Monte Carlo replications; shaded bands in figures indicate $\pm 1$ standard error across replications.

\subsection{Experiment Setup}
\label{subsec:synthetic_setup}

We simulate $Q = 100$ survey questions with equal weights $w_q = 1$ and equal costs $c_q = 1$. Each question~$q$ has a per-question variance scale $v_q$ drawn from a log-uniform distribution to create heterogeneous difficulties:
\begin{equation}
\label{eq:var_loguni}
v_q = \frac{h}{\sinh h} \cdot e^{U_q}, \quad U_q \overset{\mathrm{i.i.d.}}{\sim} \mathrm{Uniform}(-h, h),
\end{equation}
where $h$ is a heterogeneity parameter controlling the spread of question difficulties. The scaling factor $h / \sinh h$ ensures $\bE[v_q] = 1$ for all $h$, so varying $h$ redistributes difficulty across questions without changing the overall MSE scale. Since $\log v_q$ is uniform on $(-h,h)$, the standard deviation of log-difficulties is $\mathrm{sd}(\log A_q) = h/\sqrt{3}$ and the coefficient of variation of $A_q$ is $\mathrm{CV}(A_q) = \sqrt{h\coth h - 1}$. At the baseline $h = 2.0$, $\mathrm{CV} \approx 1.04$, indicating substantial heterogeneity; at $h = 0.5$, $\mathrm{CV} \approx 0.29$, a nearly homogeneous setting.

For each question $q$, paired observations $(Y, S)$ are generated from a bivariate Gaussian model with independent standard normals $U,\eps,\eta\overset{\mathrm{i.i.d.}}{\sim}\mathcal{N}(0,1)$:
\begin{align}
Y_{\mathrm{sig}} &= \sqrt{v_q}\, U, \nonumber \\
S &= \sqrt{v_q}\bigl(\rho\, U + \sqrt{1 - \rho^2}\, \eps\bigr), \nonumber \\
Y &= \sqrt{v_q}\bigl(U + \sigma_\eta\, \eta\bigr), \nonumber
\end{align}
where $Y_{\mathrm{sig}}$ is a latent signal, $S = f(X)$ is the LLM prediction correlated with the signal through $\rho \in [0, 1]$, and $\sigma_\eta$ scales additional human response noise. This gives $\Var(Y_{\mathrm{sig},q})=\Var(S_q)=v_q$ and $\Var(Y_q)=v_q(1+\sigma_\eta^2)$. The true population mean is $\theta_q^* = 0$ for all $q$. Under this model, the optimal PPI++ tuning parameter is $\lambda_q^* = \rho$ for all $q$ \textcolor{softblue}{because $\Cov(Y,S)=\rho v_q$ and $\Var(S)=v_q$}, and the rectification difficulty is
\begin{equation*}
A_q = v_q (1 - \rho^2 + \sigma_\eta^2).
\end{equation*}
Since $A_q \propto v_q$, questions with larger variance scale are harder to rectify and receive more samples under the oracle allocation Eq.~\eqref{eq:offline_oracle_pf}. Table~\ref{tab:params} summarizes the baseline parameters; Experiments~2 and~3 vary one parameter at a time.

\begin{table}[ht]
\centering
\footnotesize
\caption{Baseline parameters for synthetic experiments.}
\label{tab:params}
\begin{tabular}{llll}
\toprule
Parameter & Symbol & Baseline & Role \\
\midrule
Questions & $Q$ & 100 & Number of survey questions \\
LLM quality & $\rho$ & 0.7 & Correlation between $S$ and $Y_{\mathrm{sig}}$ \\
Heterogeneity & $h$ & 2.0 & Spread of $v_q$ across questions ($\mathrm{CV} \approx 1.04$) \\
Human noise & $\sigma_\eta$ & 0.5 & Additional noise on human response $Y$ \\
Budget (Exp.~1) & $B$ & 2{,}000 & ${\approx}\,20$ samples per question \\
Budget (Exp.~2--3) & $B$ & 1{,}000 & ${\approx}\,10$ samples per question \\
Initialization & $K$ & 3 & Samples per question before adaptive phase \\
\bottomrule
\end{tabular}
\end{table}

We compare four allocation policies, all using the PPI++ estimator with plug-in $\hat\lambda_q$ and no data splitting:
\begin{enumerate}[(i)]
\item \textit{Oracle (Neyman):} allocates proportional to $\sqrt{A_q}$ using the true difficulties, i.e., the offline optimum Eq.~\eqref{eq:offline_oracle_pf}.
\item \textit{Uniform:} equal allocation across all questions, serving as a no-information baseline representing conventional survey sampling.
\item \textit{$\eps$-Greedy:} a standard decaying $\eps$-greedy policy from the bandit literature. At round $t$, with probability $\eps_t = \min(1,\, cQ/t)$ (where $c = 5$) a question is selected uniformly at random; otherwise the greedy choice \textcolor{softblue}{$q_t = \arg\max_q \widehat{A}_{q,t}(\hat\lambda_{q,t}) / n_{q,t}^2$} is made.
\item \textit{Our UCB-based adaptive policy (labeled UCB in tables and figures):} Algorithm~\ref{alg:ucb-time-indexed} with variance-adaptive confidence bounds, balancing exploitation and exploration.
\end{enumerate}

To compare these policies, the primary metric throughout Sections~\ref{sec:synthetic}--\ref{sec:digital_twin} is the \rev{\textit{expected allocation MSE} (the oracle allocation objective)} at budget $t$, defined as
\begin{equation}
\label{eq:expected_mse}
\mathrm{MSE}(t) \;:=\; \sum_{q=1}^Q \frac{w_q A_q}{n_{q,t}},
\end{equation}
evaluated at each budget step in Figure~\ref{fig:mse_curves} and at the terminal budget $B$ in all subsequent figures and tables. In the equal-weight baseline ($w_q=1$ for all $q$), this reduces to $\sum_q A_q/n_{q,t}$. This is the natural performance measure for two reasons. First, it is the quantity that appears in the regret definition (Eq.~\eqref{eq:regret_def}): the regret $\mathcal{R}(B) = \mathrm{MSE}(B) - \mathrm{MSE}^*(B)$ directly compares each policy's expected MSE to the oracle $\mathrm{MSE}^*(B) = (\sum_q \sqrt{w_qA_qc_q})^2 / B$, so reporting expected MSE aligns the empirical evaluation with the theory. Second, in the SDR regime, $\sum_q w_q A_q/n_{q,t}$ is the oracle allocation objective implied by the asymptotic variance formula. For the plug-in PPI++ implementation used in our experiments, we therefore interpret this quantity as an allocation-quality metric rather than the exact finite-sample conditional MSE of the estimator.

\subsection{Numerical Results}
\label{subsec:synthetic_results}

\textbf{Experiment 1: Policy comparison ($Q{=}100$, $B{=}2{,}000$, $\rho{=}0.7$, $h{=}2.0$).} Figure~\ref{fig:mse_curves} reports results at the baseline parameters. At the terminal budget $B = 2{,}000$, the oracle achieves an expected MSE of 2.628; our policy attains 2.712, $\eps$-greedy 2.857, and uniform 3.529. Measuring the percentage gap to oracle, $(\mathrm{MSE} - \mathrm{MSE}^*)/\mathrm{MSE}^*$, our policy is within 3.2\%, $\eps$-greedy 8.8\%, and uniform 34.4\%. The left panel shows how the expected MSE decreases as budget is spent; our policy tracks the oracle closely after the initialization phase ($K \cdot Q = 300$ samples), whereas $\eps$-greedy converges more slowly. The center panel (log--log scale) confirms that our policy converges to the oracle rate. The right panel plots the terminal regret of our policy alone across budgets $B \in \{500,\, 1{,}000,\, 2{,}000,\, 5{,}000,\, 10{,}000\}$, showing that its regret is consistent with an $O(\ln B / B^2)$ rate, with an empirical log--log slope near $-2$.

\begin{figure}[ht]
\centering
\includegraphics[width=\textwidth]{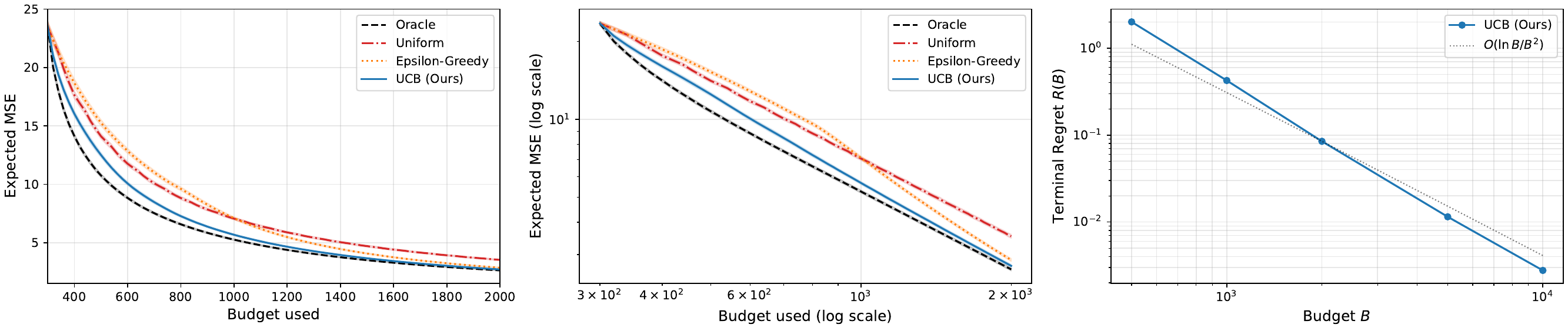}
\caption{Experiment 1 ($Q{=}100$, $B{=}2{,}000$, $\rho{=}0.7$, $h{=}2.0$, 200 reps). Left: expected MSE Eq.~\eqref{eq:expected_mse} vs.\ budget (linear). Center: same on log--log scale. Right: UCB terminal regret $\mathcal{R}(B)$ across budgets $B \in \{500, \ldots, 10{,}000\}$ with $O(\ln B / B^2)$ reference line. Shaded bands: $\pm 1$ SE across replications.}
\label{fig:mse_curves}
\end{figure}

Our policy dominates all other policies at every budget level. The $\eps$-greedy policy outperforms uniform only when the budget is sufficiently large (at $B = 2{,}000$ it achieves an 8.8\% gap versus uniform's 34.4\%, but at $B = 1{,}000$ it is slightly worse), because its undirected exploration phase consumes a large share of the budget before exploitation begins. Our policy, by contrast, directs exploration through variance-adaptive confidence bounds, building reliable $A_q$ estimates faster.

Having established our policy's overall advantage, we next examine how LLM quality and difficulty heterogeneity affect performance.

\textbf{Experiment 2: Effect of LLM quality ($B{=}1{,}000$, varying $\rho$).} Figure~\ref{fig:mse_vs_rho} varies $\rho \in \{0.0, 0.3, 0.5, 0.7, 0.9, 0.95\}$. Higher $\rho$ reduces the expected MSE for all policies equally, because $A_q = v_q(1 - \rho^2 + \sigma_\eta^2)$ scales all difficulties by the same factor. The relative gap between policies is stable across $\rho$: this stability is a consequence of the synthetic design, in which $\rho$ is a global parameter that preserves the heterogeneity structure. In real surveys, LLM quality varies across question types (see Section~\ref{sec:digital_twin}, where $\lambda_q^*$ ranges from $0$ to $0.72$); the digital twin experiments provide a more demanding test.

\begin{figure}[ht]
\centering
\includegraphics[width=\textwidth]{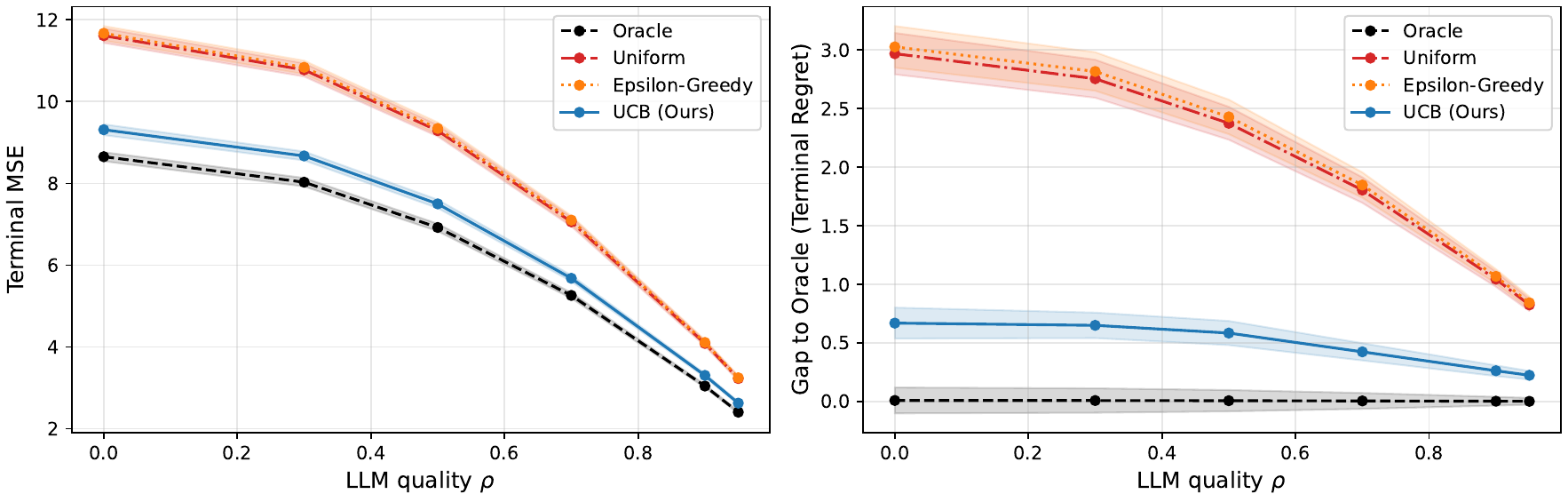}
\caption{Effect of LLM quality $\rho$ ($Q{=}100$, $B{=}1{,}000$, $h{=}2.0$, 200 reps). Left: expected MSE. Right: terminal regret $\mathcal{R}(B)$. Shaded bands: $\pm 1$ SE across replications. All policies benefit from higher $\rho$ through the PPI++ estimator; relative performance gaps are stable across $\rho$.}
\label{fig:mse_vs_rho}
\end{figure}

\textbf{Experiment 3: Effect of heterogeneity ($B{=}1{,}000$, varying $h$).} Figure~\ref{fig:mse_vs_h} varies $h \in \{0.5, 1.0, 1.5, 2.0\}$ and reveals the central insight: the value of adaptive allocation is determined by difficulty heterogeneity. When questions are nearly homogeneous ($h = 0.5$), uniform allocation is near-optimal and adaptive methods incur unnecessary exploration cost. As heterogeneity increases, uniform's gap grows from 2.2\% to 34.4\%, while our policy's gap grows only from 6.2\% to 8.1\%, because its directed exploration successfully tracks the increasingly non-uniform oracle allocation.

\begin{figure}[ht]
\centering
\includegraphics[width=\textwidth]{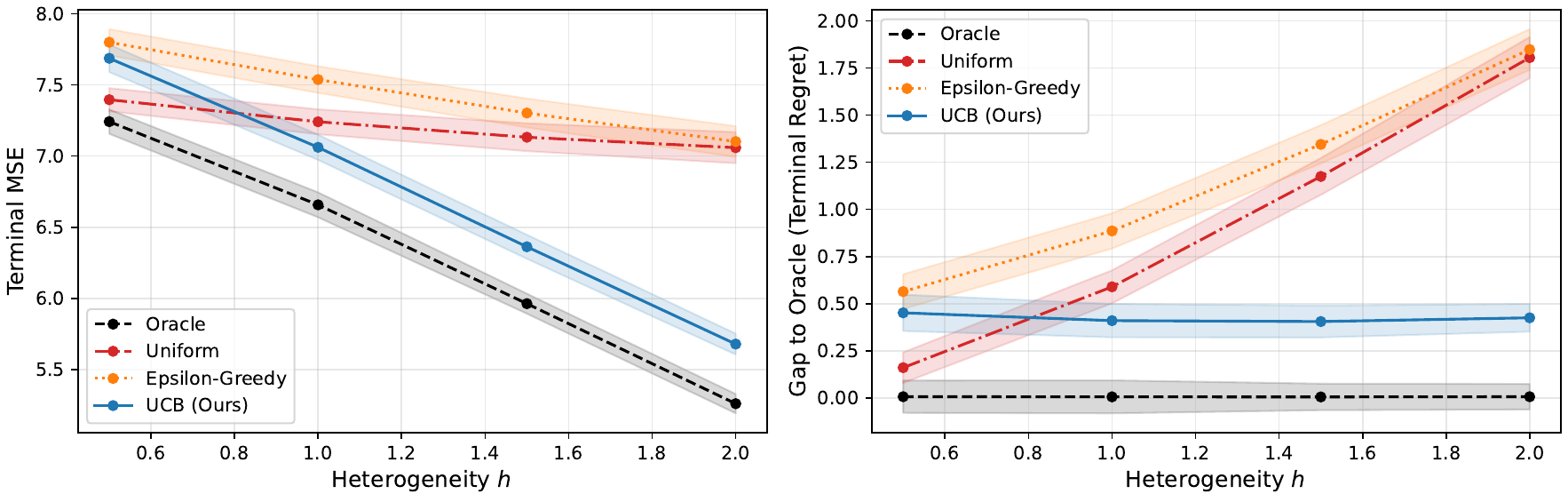}
\caption{Effect of heterogeneity $h$ ($Q{=}100$, $B{=}1{,}000$, $\rho{=}0.7$, 200 reps). Left: expected MSE. Right: terminal regret $\mathcal{R}(B)$. Shaded bands: $\pm 1$ SE across replications. UCB's advantage over uniform and $\eps$-greedy grows with heterogeneity; at low $h$, all policies perform similarly as the oracle allocation is nearly uniform.}
\label{fig:mse_vs_h}
\end{figure}

\section{Digital-Twin Data Analysis}
\label{sec:digital_twin}

We now evaluate our UCB-based allocation policy on real survey data from the Twin-2K-500 dataset \citep{toubia2025database}, a large-scale digital twin benchmark in which each human respondent is paired with an LLM-generated ``digital twin.'' This provides a realistic testbed where the rectification difficulties $\{A_q\}$ arise from \textcolor{softblue}{the tuned human--LLM residual variation observed in real responses} rather than a parametric data-generating process. \rev{Three features make this dataset well suited for evaluating adaptive allocation: it provides ground-truth human responses paired with LLM predictions across a large, heterogeneous task set ($Q = 68$ spanning 14 task types); the respondent pools are large enough (651--2{,}058 per question) to simulate sequential data collection with replacement; and the substantial cross-task variation in LLM reliability (coefficient of variation of $A_q$ is $0.63$) creates exactly the heterogeneity that the allocation policy is designed to exploit. Because the digital twin is constructed from real survey responses rather than simulated data, performance differences across policies reflect the empirical structure of human--AI agreement, not artifacts of a parametric design.}

\subsection{Data and Setup}
\label{subsec:dt_setup}

The Twin-2K-500 dataset contains four survey waves administered to over 2{,}000 US respondents. Waves 1--3 each cover around 500 questions spanning demographics, psychological scales, cognitive tasks, and behavioral economics. Wave 4 re-administers a subset of questions to the same respondent panel for test-retest reliability assessment. Following the same evaluation protocol as \citet{toubia2025database}, we focus on the Wave 4 retest battery and restrict attention to multiple-choice questions. This yields $Q = 68$ questions across 14 task types, with between 651 and 2{,}058 respondents per question (mean 1{,}271). The dataset provides paired LLM responses generated by GPT-4o from each respondent's ``digital twin'' profile. Human and LLM responses are min-max scaled to $[0, 1]$ following the same data processing. We compute the full-sample rectification difficulty $A_q = \Var(Y_q - \lambda_q^* Y^{\mathrm{LLM}}_q)$ for each question, where $\lambda_q^* = \Cov(Y_q, Y^{\mathrm{LLM}}_q) / \Var(Y^{\mathrm{LLM}}_q)$, clipped to $[0, 1]$, is the optimal PPI++ tuning parameter.

Figure~\ref{fig:dt-scatter} plots each question's rectification difficulty $A_q$ against its optimal PPI++ tuning parameter $\lambda_q^*$, colored by task type. Two features stand out. First, difficulties are heterogeneous ($A_q$ ranges from $0.024$ to $0.239$, CV $= 0.63$). Second, 38\% of questions have $\lambda_q^* \approx 0$, meaning the LLM signal does not reduce residual variance for those items.

\begin{figure}[ht]
\centering
\includegraphics[width=0.75\textwidth]{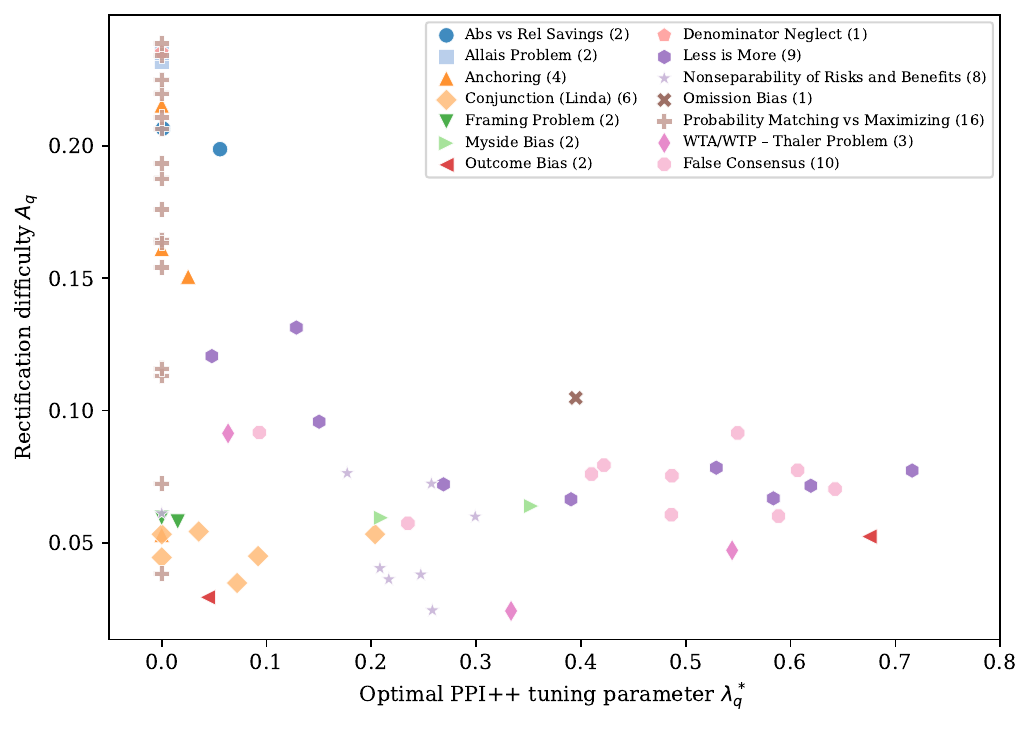}
\caption{Rectification difficulty $A_q$ versus optimal PPI++ tuning parameter $\lambda_q^*$ for the 68 Wave~4 questions, colored by task type (number of questions in parentheses). Questions with $\lambda_q^* \approx 0$ (left cluster) receive no variance reduction from the LLM.}
\label{fig:dt-scatter}
\end{figure}

\paragraph{Experiment setup.}
We evaluate the same four policies as in Section~\ref{subsec:synthetic_setup}: Oracle, Uniform, $\eps$-greedy, and UCB. In addition, we introduce a fifth policy, Explore-then-Commit (ETC), which operationalizes the natural ``pilot study'' approach from survey practice. ETC allocates a fraction $\alpha$ of the post-initialization budget uniformly across all questions to estimate $\{A_q\}$, then commits the remaining budget to the Neyman allocation $n_q\propto\sqrt{w_q\widehat A_q/c_q}$ computed from the pilot estimates.  We use $\alpha=0.3$ as the default and examine sensitivity to $\alpha$ below. At each replication, human--LLM pairs are drawn with replacement from the respondent pool, simulating the sequential survey process on real data. As in Section~\ref{sec:synthetic}, we report the expected MSE Eq.~\eqref{eq:expected_mse} evaluated at the terminal budget~$B$. All experiments use 500 Monte Carlo replications.

\subsection{Policy Comparison}
\label{subsec:dt_comparison}

As in Section~\ref{sec:synthetic}, we evaluate all policies using the expected MSE $\sum_q w_q A_q / n_q(B)$, which measures allocation quality at the terminal budget using the true $A_q$ values. Table~\ref{tab:dt-budget} reports this metric across budgets $B \in \{500, 1{,}000, 1{,}500, 2{,}000\}$, along with each policy's gap to oracle in parentheses.

Our policy achieves the lowest MSE among all implementable policies at every budget level. At $B = 1{,}000$, the ordering is our policy (3.9\% gap to oracle), $\eps$-greedy (6.2\%), uniform (10.6\%), and ETC (19.7\%). Our policy's gap decreases monotonically with budget, from 5.5\% at $B = 500$ to 2.1\% at $B = 2{,}000$, while uniform's gap remains stable near 10--12\%, reflecting its structural inability to adapt. The $\eps$-greedy policy performs particularly poorly at small budgets (19.8\% gap at $B = 500$) because its undirected exploration phase consumes most of the budget before exploitation begins; it overtakes uniform only above $B = 1{,}000$. ETC with pilot fraction $\alpha=0.3$ is consistently worse than uniform, because the pilot phase provides only ${\sim}\,6$ samples per question, too few to estimate $\{A_q\}$ reliably. ETC is also sensitive to $\alpha$: it requires $\alpha \ge 0.4$ to match uniform (see Table~\ref{tab:dt-etc-sweep} in the e-companion), meaning a substantial share of the budget must be spent on undirected pilot sampling before any informed allocation begins. \rev{A pure plug-in greedy rule (always sampling the question with the largest current plug-in index, with no exploration bonus) is far worse still, with a 127.6\% gap to the oracle: lacking exploration, it locks onto noisy early estimates of $A_q$ and never corrects them. This is exactly the failure mode that our policy's confidence bonus prevents.}

\rev{We read the gap to oracle as the extra budget a policy would need to match the oracle's accuracy: because the allocation objective scales as $1/B$, a policy with a $g$ gap needs about $g$ more budget than the oracle. Uniform allocation thus needs roughly 10--12\% more budget at every level, whereas our policy needs only 2--6\% (about 2\% at $B = 2{,}000$), despite having no prior knowledge of question-level LLM reliability.}

To understand where our policy's gains come from, we group the 68 questions by whether the LLM provides useful signal: 32 ``LLM-weak'' questions where $\lambda_q^* < 0.05$ and 36 ``LLM-useful'' questions where $\lambda_q^* \ge 0.05$. For LLM-useful questions, the plug-in $\lambda$ shrinks $A_q$ by absorbing LLM-predictable variance, leaving these questions with low residual difficulty. LLM-weak questions, by contrast, retain high $A_q$ and dominate the total estimation error. The oracle allocation directs 56\% of the budget to the LLM-weak group, compared to uniform's 47\% ($= 32/68$). Our policy learns this imbalance online and shifts budget toward the high-$A_q$ questions that uniform allocation under-serves.

\begin{table}[ht]
\centering
\footnotesize

\caption{Expected MSE ($\times 10^{-2}$) across budgets on Twin-2K-500 (500 reps). Gap to oracle (\%) in parentheses.}
\label{tab:dt-budget}
\begin{tabular}{rc|cccc}
\toprule
$B$ & Oracle & \textsc{UCB} & \textsc{ETC} & \textsc{$\eps$-Greedy} & \textsc{Uniform} \\
\midrule
500 & 89.1 & \textbf{94.0}\,(5.5\%) & 107.9\,(21.1\%) & 106.7\,(19.8\%) & 99.6\,(11.8\%) \\
1{,}000 & 44.6 & \textbf{46.3}\,(3.9\%) & 53.3\,(19.7\%) & 47.3\,(6.2\%) & 49.3\,(10.6\%) \\
1{,}500 & 29.7 & \textbf{30.5}\,(2.7\%) & 33.8\,(13.9\%) & 30.9\,(4.0\%) & 32.7\,(10.0\%) \\
2{,}000 & 22.3 & \textbf{22.8}\,(2.1\%) & 24.5\,(9.9\%) & 23.1\,(3.7\%) & 24.6\,(10.3\%) \\
\bottomrule
\end{tabular}
\begin{tablenotes}
\footnotesize
\item Notes: Expected MSE $= \sum_q A_q / n_q(B)$ measures allocation quality at the terminal budget, using the true $A_q$ values; same primary metric as in Section~\ref{sec:synthetic}. Gap to oracle: $(\mathrm{MSE} - \mathrm{MSE}^*)/\mathrm{MSE}^* \times 100$. ETC uses pilot fraction $\alpha=0.3$. All standard errors ${\leq}\,0.1$ ($\times 10^{-2}$).
\end{tablenotes}

\end{table}

\subsection{Weight and Cost Sensitivity}
\label{subsec:dt_sensitivity}

In practice, questions may differ in importance or sampling cost. We examine sensitivity to these factors by varying the dispersion parameter $a \in \{0, 0.25, 0.5, 0.75, 1.0\}$ for importance weights (panel~a: $w_q = 1 + a \cdot z_q$, $z_q \sim \mathrm{Uniform}(-1, 1)$, clipped to $[0, 2]$, $c_q = 1$) and sampling costs (panel~b: same construction applied to $c_q$, $w_q = 1$), fixing $B = 1{,}000$.

Table~\ref{tab:dt-sensitivity} shows that our policy maintains a stable gap of approximately 4\% regardless of weight or cost dispersion, while uniform allocation degrades sharply (from 10.6\% to 27.9\% under heterogeneous weights) because it ignores both importance and cost structure. Our policy's Neyman-style index automatically concentrates budget on questions that are both important and hard to rectify.

\begin{table}[ht]
\centering
\footnotesize
\caption{Sensitivity to weight and cost heterogeneity: expected MSE ($\times 10^{-2}$), $B{=}1{,}000$, 500 reps.}
\label{tab:dt-sensitivity}
\begin{tabular}{rc|cccc|rc|cccc}
\toprule
\multicolumn{6}{c|}{(a) Varying weights, $c_q = 1$} & \multicolumn{6}{c}{(b) Varying costs, $w_q = 1$} \\
\cmidrule(r){1-6} \cmidrule(l){7-12}
$a$ & Oracle & \textsc{UCB} & \textsc{ETC} & \textsc{$\eps$-Gr.} & \textsc{Unif.} & $a$ & Oracle & \textsc{UCB} & \textsc{ETC} & \textsc{$\eps$-Gr.} & \textsc{Unif.} \\
\midrule
0.00 & 44.6 & \textbf{46.3}\,(3.9\%) & 53.3 & 47.3 & 49.3\,(10.6\%) & 0.00 & 44.6 & \textbf{46.3}\,(3.9\%) & 53.3 & 47.3 & 49.3\,(10.6\%) \\
0.25 & 43.7 & \textbf{45.4}\,(3.8\%) & 52.5 & 46.6 & 48.8\,(11.6\%) & 0.25 & 44.9 & \textbf{46.6}\,(3.9\%) & 53.5 & 47.7 & 49.6\,(10.5\%) \\
0.50 & 42.4 & \textbf{44.1}\,(4.1\%) & 51.1 & 45.6 & 48.4\,(14.1\%) & 0.50 & 44.7 & \textbf{46.5}\,(4.0\%) & 53.2 & 47.7 & 49.9\,(11.5\%) \\
0.75 & 40.3 & \textbf{42.0}\,(4.0\%) & 49.1 & 44.2 & 47.9\,(18.8\%) & 0.75 & 44.0 & \textbf{45.7}\,(3.9\%) & 52.3 & 47.4 & 50.1\,(13.8\%) \\
1.00 & 37.1 & \textbf{38.7}\,(4.2\%) & 46.4 & 42.4 & 47.5\,(27.9\%) & 1.00 & 42.3 & \textbf{44.0}\,(4.1\%) & 50.4 & 46.9 & 50.1\,(18.4\%) \\
\bottomrule
\end{tabular}
\begin{tablenotes}
\footnotesize
\item Notes: Expected MSE $= \sum_q w_q A_q / n_q$; all standard errors ${\leq}\,0.1$ ($\times 10^{-2}$). Weights (costs) are $w_q = 1 + a \cdot z_q$ with $z_q \overset{\mathrm{i.i.d.}}{\sim} \mathrm{Uniform}(-1, 1)$, clipped to $[0, 2]$. The draw $\{z_q\}$ is fixed across $a$. ETC uses $\alpha=0.3$. Oracle is the Neyman-optimal MSE.
\end{tablenotes}
\end{table}

\subsection{Effect of Difficulty Heterogeneity}
\label{subsec:dt_heterogeneity}

We rescale the spread of rectification difficulties using a log-space transformation that mirrors the synthetic DGP Eq.~\eqref{eq:var_loguni}:
\begin{equation}
\label{eq:log_heterogeneity}
\log A_q^{(h)} = \overline{\log A} + h \cdot (\log A_q - \overline{\log A}),
\end{equation}
where $\overline{\log A} = Q^{-1}\sum_q \log A_q$ is the mean log-difficulty and $h \geq 0$ controls heterogeneity. The resulting $A_q^{(h)}$ are then renormalized so that $\bar{A}^{(h)} = \bar{A}$, preserving the overall MSE scale. At $h = 0$ all difficulties are equal; $h = 1$ recovers the original data; $h > 1$ amplifies heterogeneity.

Table~\ref{tab:dt-heterogeneity} reports the terminal regret (absolute, not percentage) for each policy. When difficulties are homogeneous ($h = 0$), uniform allocation is oracle-optimal and both our policy and $\eps$-greedy incur unnecessary exploration cost. As heterogeneity grows, the benefit of adaptive allocation increases sharply: uniform regret rises from near zero to $14.22$ ($\times 10^{-2}$), while our policy's regret grows only from $0.95$ to $4.12$ (4-fold). ETC regret is nearly constant across heterogeneity levels (${\sim}\,8.7$--$9.1$), because the noisy pilot estimates produce similarly suboptimal commit allocations regardless of the underlying difficulty structure. The comparison with the synthetic heterogeneity experiment (Section~\ref{subsec:synthetic_results}) is instructive: on the real data, the same qualitative pattern holds (our policy's advantage grows monotonically with heterogeneity), but the absolute magnitudes are smaller, reflecting the more moderate difficulty distribution in real surveys.

\begin{table}[ht]
\centering
\footnotesize

\caption{Terminal regret and oracle MSE ($\times 10^{-2}$) under varying $A_q$ heterogeneity ($B{=}1{,}000$, 500 reps). Mean $A_q$ is held constant at 0.106 across all $h$.}
\label{tab:dt-heterogeneity}
\begin{tabular}{rc|cccc}
\toprule
$h$ & Oracle & \textsc{UCB} & \textsc{ETC} & \textsc{$\eps$-Greedy} & \textsc{Uniform} \\
\midrule
0.00 & 49.1 & 0.95\,(1.9\%) & 8.71\,(17.7\%) & 1.81\,(3.7\%) & \textbf{0.00}\,(0.0\%) \\
0.25 & 48.7 & 0.95\,(2.0\%) & 8.75\,(18.0\%) & 1.79\,(3.7\%) & \textbf{0.37}\,(0.8\%) \\
0.50 & 47.8 & \textbf{1.05}\,(2.2\%) & 8.76\,(18.3\%) & 1.93\,(4.0\%) & 1.31\,(2.7\%) \\
1.00 & 44.6 & \textbf{1.72}\,(3.9\%) & 8.78\,(19.7\%) & 2.74\,(6.2\%) & 4.70\,(10.6\%) \\
1.50 & 40.1 & \textbf{2.85}\,(7.1\%) & 8.90\,(22.2\%) & 4.30\,(10.7\%) & 9.31\,(23.2\%) \\
2.00 & 35.3 & \textbf{4.12}\,(11.7\%) & 9.12\,(25.8\%) & 6.39\,(18.1\%) & 14.22\,(40.3\%) \\
\bottomrule
\end{tabular}
\begin{tablenotes}
\footnotesize
\item Notes: Regret $\mathcal{R}(B) = \mathrm{MSE}(B) - \mathrm{MSE}^*(B)$ where $\mathrm{MSE}(B) = \sum_q A_q / n_q(B)$. Gap to oracle (\%) in parentheses. Oracle column reports $\mathrm{MSE}^*(B)$ ($\times 10^{-2}$). ETC uses $\alpha=0.3$. $h = 0$: all $A_q$ equal (uniform is oracle-optimal); $h = 1$: original data. All standard errors ${\leq}\,0.2$ ($\times 10^{-2}$).
\end{tablenotes}

\end{table}

\subsection{Module-Level Allocation}
\label{subsec:dt_module}

As discussed in Section~\ref{subsec:survey_setting}, many surveys assign respondents to question modules rather than individual questions. We demonstrate that our policy applies directly at this coarser level. We group the 68 questions into $Q = 14$ task-type modules (e.g., ``Anchoring'' with 4 questions, ``Probability Matching'' with 16 questions). Each budget unit allocates one respondent to a module, and that respondent answers all questions in the module, so the module-level difficulty is $A_w = \sum_{q \in w} A_q$ (a sum, not an average, because each respondent contributes one observation to every question in the module). The resulting $A_w$ values range from $0.08$ to $2.71$, exhibiting substantial heterogeneity.

With $Q = 14$ modules and budget $B = 200$ respondent-module assignments (approximately 14 per module under uniform allocation), our policy achieves a 3.8\% gap to oracle, closely matching its 3.9\% gap in the question-level experiment (Table~\ref{tab:dt-budget}). Uniform allocation incurs a 29.7\% gap, while ETC (9.1\%) and $\eps$-greedy (10.6\%) fall between. The same algorithm applies without modification because the $A_w / n_w$ objective structure is preserved.

Together, these experiments confirm that the theoretical advantages of our adaptive policy demonstrated on synthetic data carry over to a real survey environment: our policy consistently achieves the smallest regret among all implementable policies, outperforming not only uniform allocation and $\eps$-greedy but also the explore-then-commit approach that operationalizes the natural pilot-study heuristic. Our policy adapts to heterogeneous weights and costs, and scales its advantage with the degree of difficulty heterogeneity across questions. Moreover, the framework applies equally well at coarser operational units: when allocation targets are survey modules rather than individual questions, our policy continues to track the oracle closely.

\paragraph{\rev{Managerial implications.}} \rev{For a firm deploying AI in analytics workflows, these results carry three operational messages. First, a thin layer of well-targeted human validation recovers most of the value of an omniscient allocation, with no separate pilot study: across budgets, uniform validation leaves a persistent 10--12\% efficiency gap to the oracle, which the adaptive policy cuts to 2--6\% (about 2\% at $B = 2{,}000$) despite starting with no knowledge of where the LLM is reliable. Because the gap measures the extra budget a policy needs to match the oracle's accuracy, closing it returns roughly 6--8\% of the human-validation budget at no loss in precision. Second, whether adaptation is worth deploying is itself measurable: its value is governed by the heterogeneity of AI reliability across tasks, not by average AI quality. On our data, where rectification difficulty varies with a coefficient of variation of $0.63$ and 38\% of questions receive no usable LLM signal, the firm sits squarely in the regime where targeting pays; when difficulties are nearly homogeneous, uniform coverage is already near-optimal (Table~\ref{tab:dt-heterogeneity}). Third, the policy operates at whatever unit the firm can act on: run over 14 thematic modules instead of 68 questions, the same algorithm cuts uniform's 29.7\% gap to 3.8\%, so human oversight can be targeted at the level of survey modules, product lines, or audit queues without re-engineering the data-collection pipeline.}

\section{Conclusion}
\label{sec:conclusion}

\rev{This paper studies how to allocate scarce human validation across AI-assisted tasks when task-level reliability is heterogeneous and unknown. The key object is rectification difficulty, the residual noise after using AI predictions as a control variate, and our adaptive UCB-based policy learns it during data collection, directing human effort where it is most valuable, with formal regret guarantees. On both synthetic data and a real digital-twin survey, our policy closes most of the gap to the oracle, outperforming uniform allocation, pilot-study designs, and $\eps$-greedy baselines. The framework extends to module-level allocation and to general $M$-estimation targets such as regression coefficients and conjoint/choice partworths (Section~\ref{appsec:M-estimation} of the e-companion).}

\textcolor{softblue}{Although we use survey questions as the running allocation unit, the index $q$ can represent any unit for which additional human labels reduce uncertainty at rate $A_q/n_q$. In particular, $q$ may denote a stratum or customer segment.}

\rev{The allocation gains depend on heterogeneity in rectification difficulty: when all tasks are equally hard for the AI, uniform allocation is already optimal and adaptive learning adds no value (Section~\ref{subsec:dt_heterogeneity}). The policy also requires enough allocation units ($Q \gtrsim 10$) and budget per unit ($B/Q \gtrsim 10$) for exploration costs to be amortized; the current formulation assumes independent tasks.} More broadly, the operational value of AI depends not only on model accuracy but also on how human oversight is allocated. Future directions include integrating downstream decision losses into the allocation objective, batched allocation for panel operations, adaptation to model drift, \textcolor{softblue}{and allocation under distributional mismatch between the target population and the AI covariate pool, where the allocation index would need to account for the resulting bias--variance tradeoff.}

\bibliographystyle{pomsref}
\bibliography{mybib}

\newpage
\ECSwitch
\renewcommand{\theHsection}{EC\arabic{section}}
\renewcommand{\theHequation}{EC\arabic{equation}}
\renewcommand{\theHfigure}{EC\arabic{figure}}
\renewcommand{\theHtable}{EC\arabic{table}}
\renewcommand{\theHtheorem}{EC\arabic{theorem}}
\renewcommand{\theHlemma}{EC\arabic{lemma}}

\begin{center}
\textbf{\Large E-Companion:}\
\textbf{\Large Allocating Human Oversight in AI-Enabled Analytics}
\end{center}

\section{Extension to General $M$-Estimation}
\label{appsec:M-estimation}

\rev{This appendix provides the formal details for the $M$-estimation extension summarized in Section~\ref{subsec:m_estimation_main}.} The main text focuses on population means ($\theta_q^*=\bE[Y_q]$), but many applications target richer estimands such as category choice probabilities, regression coefficients, conjoint/choice partworths, risk- or fraud-score coefficients, and AI-output calibration. Here we show that the $A_q/n_q$ variance structure, and therefore our entire allocation framework, carries over to any estimand defined through an $M$-estimation problem, once the multi-dimensional asymptotic covariance is reduced to a scalar difficulty index. We fix a question $q$ and suppress subscripts throughout, writing $\cP$, $\bm\theta^*$, $d$, $n$, etc.

\subsection{Setup and examples}

Consider a convex, twice-differentiable loss $\ell(\bm X,Y;\bm\theta)$ whose population risk has a unique minimizer $\bm\theta^* = \argmin_{\bm\theta\in\mathbb R^d}\,\bE[\ell(\bm X,Y;\bm\theta)]$, with score function $\bm\psi(\bm X,Y;\bm\theta):=\nabla_{\bm\theta}\ell(\bm X,Y;\bm\theta)$ satisfying $\bE[\bm\psi(\bm X,Y;\bm\theta^*)]=\bm 0$. Let $\bm H:=\bE[\nabla_{\bm\theta}\bm\psi(\bm X,Y;\bm\theta^*)]$ denote the nonsingular population Hessian.

This setup covers four common cases. (i)~\emph{Mean estimation}: $d=1$, $\ell=\tfrac{1}{2}(Y-\theta)^2$, recovering $\theta^*=\bE[Y]$. (ii)~\emph{Categorical responses}: for $Y\in\{1,\dots,K\}$, the loss $\ell=\sum_k \tfrac{1}{2}(\bI\{Y{=}k\}-\theta_k)^2$ yields category probabilities $\theta_k^*=\Pr(Y{=}k)$. (iii)~\emph{Linear regression}: $\ell=\tfrac{1}{2}(Y-\bm X^\top\bm\theta)^2$ gives $\bm\theta^*=\bE[\bm X\bm X^\top]^{-1}\bE[\bm X Y]$. (iv)~\emph{Multinomial logit}: the MNL negative log-likelihood $\ell=\log(\sum_k e^{\bm X_k^\top\bm\theta}) - \sum_k (\bm X_k^\top\bm\theta)\bI\{Y{=}k\}$ yields conjoint partworths.

\subsection{PPI++ $M$-estimator and variance structure}

Given an LLM predictor $f(\bm X)$, define the surrogate score $\bm\psi^{\mathrm{LLM}}(\bm X;\bm\theta):=\bm\psi(\bm X,f(\bm X);\bm\theta)$. For tuning parameter $\lambda\in[0,1]$, the PPI++ $M$-estimator solves
\begin{equation}
\label{eqn:Mestimator_score_pp}
\bm 0 = \frac{1}{n}\sum_{i=1}^n \bm\psi(\bm X_i,Y_i;\bm\theta) + \lambda\left(\bE[\bm\psi^{\mathrm{LLM}}(\bm X;\bm\theta)] - \frac{1}{n}\sum_{i=1}^n \bm\psi^{\mathrm{LLM}}(\bm X_i;\bm\theta)\right),
\end{equation}
where the expectation is approximated from a large synthetic pool. Because the surrogate terms cancel at $\bm\theta^*$, the estimator is consistent for any $\lambda$ regardless of LLM misspecification.

In the synthetic-data-rich regime, linearizing Eq.~\eqref{eqn:Mestimator_score_pp} around $\bm\theta^*$ yields $\sqrt{n}(\widehat{\bm\theta}(\lambda)-\bm\theta^*) \xrightarrow{d} \mathcal N(\bm 0,\bm\Sigma(\lambda))$, where the sandwich covariance is
$$
\bm\Sigma(\lambda) = \bm H^{-1}\bm V^{\Delta(\lambda)}\bm H^{-\top}, \qquad \bm V^{\Delta(\lambda)} := \Var\!\big(\bm\psi(\bm X,Y;\bm\theta^*)-\lambda\,\bm\psi^{\mathrm{LLM}}(\bm X;\bm\theta^*)\big).
$$
As in the scalar case, $\Var(\widehat{\bm\theta}(\lambda))\approx \bm\Sigma(\lambda)/n$: the labeled sample size $n$ enters only through a $1/n$ prefactor.

\subsection{Scalar difficulty index for allocation}

The allocation framework requires a scalar difficulty per question. Two standard design criteria reduce $\bm\Sigma(\lambda)$ to a scalar while preserving the $1/n$ structure. The {A-optimal} (trace) criterion defines $A^{(A)}(\lambda) := \mathrm{tr}(\bm\Omega\,\bm\Sigma(\lambda))$ for a weight matrix $\bm\Omega=\bm L^\top\bm L\succeq\bm 0$ reflecting which linear functions of $\bm\theta$ matter most. The {D-optimal} (determinant) criterion defines $A^{(D)}(\lambda) := \det(\bm\Sigma(\lambda))^{1/d}$, which is reparameterization-invariant. \textcolor{softblue}{For PPI++, each question uses $\lambda_q^*\in\argmin_{\lambda\in[0,1]}A_q(\lambda)$ under the chosen scalarization, and we write $A_q:=A_q(\lambda_q^*)$.} Under either criterion, the per-question objective takes the form $A(\lambda_q^*)/n$, so the same UCB allocation algorithm applies after replacing the scalar rectification difficulty with the chosen scalarization. \rev{A full finite-sample regret guarantee for general $M$-estimators would additionally require concentration bounds for the plug-in sandwich-covariance estimate; the main theorem (Section~\ref{sec:algorithm}) states the guarantee for scalar means, and the experiment in Section~\ref{sec:mnl_extension} below illustrates the broader allocation structure. We leave the general regret analysis to future work.}

\subsection{Numerical illustration: choice-model (MNL/conjoint) estimation}
\label{sec:mnl_extension}

\rev{We illustrate the $M$-estimation extension in a choice-model setting. Conjoint and choice models inform product design, assortment, pricing, and launch decisions; the human input is a costly observed choice, the AI prediction is a low-cost predicted choice, and the estimand is a vector of partworths rather than a scalar mean.}

\rev{We use a synthetic multinomial logit (MNL) experiment with $Q=50$ choice tasks, each with $K=3$ alternatives and $d=2$-dimensional partworths. For each task~$q$, alternatives have features $\bm X_q\in\mathbb R^{K\times d}$ from a rotated balanced design scaled by a per-task factor $s_q$ drawn from the log-uniform distribution in Eq.~\eqref{eq:var_loguni} with $h=1.0$. True partworths $\bm\beta_q^*$ are unit-norm random directions, and LLM partworths are $\bm\beta_q^{\mathrm{LLM}}=\rho\,\bm\beta_q^*+\bm\varepsilon_q$ with $\rho=0.7$ and $\bm\varepsilon_q\sim\mathcal N(\bm 0,0.3^2\bm I_d)$. Human and LLM choices are sampled from $\mathrm{MNL}(\bm X_q,\bm\beta_q^*)$ and $\mathrm{MNL}(\bm X_q,\bm\beta_q^{\mathrm{LLM}})$, respectively. The scalar difficulty is the trace (A-optimal) criterion $A_q=\mathrm{tr}(\bm H_q^{-1}\bm V_q^{\Delta(\lambda_q^*)}\bm H_q^{-\top})$, computed by Monte Carlo with $\lambda_q^*$ minimizing the trace over $[0,1]$; the resulting $A_q$ range from $4.8$ to $39.7$ (coefficient of variation $0.72$). We run the allocation with budget $B=2{,}000$, initialization $K=3$, and $200$ replications.}

\begin{table}[ht]
\centering
\footnotesize
\caption{MNL partworth estimation: expected MSE at $B{=}2{,}000$ ($Q{=}50$, $K{=}3$, $d{=}2$, 200 reps). Gap to oracle (\%) in parentheses.}
\label{tab:mnl-allocation}
\begin{tabular}{lcc}
\toprule
Policy & MSE & Gap (\%) \\
\midrule
Oracle & 15.74 & --- \\
\textsc{UCB} & \textbf{16.06} & 2.0 \\
\textsc{ETC} & 16.44 & 4.5 \\
\textsc{Uniform} & 17.79 & 13.1 \\
\bottomrule
\end{tabular}
\begin{tablenotes}
\footnotesize
\item Notes: \textcolor{softblue}{$A_q = \mathrm{tr}(\bm H_q^{-1}\bm V_q^{\Delta(\lambda_q^*)}\bm H_q^{-\top})$} computed via Monte Carlo, with $\lambda_q^*$ chosen under the trace criterion. ETC uses pilot fraction $\alpha=0.3$. All standard errors ${\leq}\,0.02$.
\end{tablenotes}
\end{table}

\rev{Table~\ref{tab:mnl-allocation} shows that the same allocation mechanism carries over: our policy attains a $2.0\%$ gap to the oracle, comparable to the mean-estimation results in Sections~\ref{sec:synthetic}--\ref{sec:digital_twin}, while uniform allocation incurs a $13.1\%$ higher allocation MSE. Once an AI-assisted estimation problem admits a scalarized sandwich difficulty, the same human-validation allocation rule applies; the method is not tied to survey means.}

\section{Proof of Theorem~\ref{thm:regret}}
\label{appsec:proof_thm1}

\textcolor{softblue}{Throughout this proof, write $A_q:=A_q(\lambda_q^*)=\Var(Y_q-\lambda_q^*Y_q^{\mathrm{LLM}})$, $\Lambda:=\sum_j\sqrt{w_jA_jc_j}$, and $\beta_q:=\sqrt{w_qA_q/c_q}/\Lambda$.}

\subsection{Step I: PPI++ Good Event and Optimism}
\label{subsec:step1_ppipp_abs}

Recall $Y^{\mathrm{LLM}}_q:=Y_q^{\mathrm{LLM}}=f(X_q)$ and the tuned residual $\tilde Y_q(\lambda):=Y_q-\lambda Y^{\mathrm{LLM}}_q$. Let
$$
A_q(\lambda_q^*):=\Var(\tilde Y_q(\lambda_q^*)),\qquad
\lambda_q^*:=\Pi_{[0,1]}\!\left(\frac{\Cov(Y_q,Y^{\mathrm{LLM}}_q)}{\Var(Y^{\mathrm{LLM}}_q)}\right).
$$

At time $t$, the algorithm forms the plug-in estimator
$$
\hat\lambda_{q,t}:=\Pi_{[0,1]}\!\left(\frac{\widehat{\Cov}_t(Y_q,Y^{\mathrm{LLM}}_q)}{\widehat{\Var}_t(Y^{\mathrm{LLM}}_q)}\right),
\qquad
\widehat A_{q,t}(\hat\lambda_{q,t}):=\widehat{\Var}_t\!\big(Y_q-\hat\lambda_{q,t}Y^{\mathrm{LLM}}_q\big).
$$
Here $\widehat{\Cov}_t$ and $\widehat{\Var}_t$ in the ratio are empirical centered moments based on the $n_{q,t}$ paired samples collected for unit $q$ up to time $t$, while $\widehat A_{q,t}$ is the usual sample variance of the tuned residuals. \textcolor{softblue}{As in the main text, if the empirical denominator in the ratio is zero, the ratio is set to zero; on the good event below this convention is never active for $s\ge K$.}
We distinguish notationally between fixed-sample-size statistics (indexed by~$s$) and adaptive-time statistics (indexed by~$t$): we write $\tilde\lambda_{q,s}$ and $\tilde\sigma_{q,s}(\lambda)$ for the plug-in tuning parameter and sample standard deviation computed from the first $s$ i.i.d.\ pairs of question~$q$, and reserve $\hat\lambda_{q,t}$ and $\widehat A_{q,t}$ for the same quantities evaluated at adaptive time~$t$ (with $n_{q,t}$ pairs). The two are related by $\hat\lambda_{q,t}=\tilde\lambda_{q,n_{q,t}}$ and $\widehat A_{q,t}(\hat\lambda_{q,t})=\widetilde A_{q,n_{q,t}}(\tilde\lambda_{q,n_{q,t}})$.
Set $T_{\max}:=\lfloor B/c_{\min}\rfloor$ and $\delta_{q,s}:=\delta/(QT_{\max})$. Define the radius
\begin{equation}
\label{eq:rho_ppipp_step1_abs}
\rho^{++}_{q,s}
:=
R\sqrt{\frac{2\ln\!\big(4(T_{\max}+1)/\delta_{q,s}\big)}{s-1}}
+
\frac{\sqrt{2}\,R_{Y^{\mathrm{LLM}}}}{T_{\max}}
+
\frac{R_{Y^{\mathrm{LLM}}}}{2}\,\Delta_\lambda\!\big(s,\delta_{q,s}/2\big),
\end{equation}
where $\Delta_\lambda(\cdot,\cdot)$ will be specified in Eq.~\eqref{eq:Delta_lambda_step1_abs}. We define the PPI++ good event as
\begin{equation*}
\mathcal{E}^{++}
:=
\bigcap_{q=1}^Q\bigcap_{s=K}^{T_{\max}}
\left\{
\left|\sqrt{A_q(\lambda_q^*)}-\sqrt{\widetilde A_{q,s}(\tilde\lambda_{q,s})}\right|
\le
\rho^{++}_{q,s}
\right\}.
\end{equation*}

\begin{lemma}
\label{lem:good_event_ppipp_abs}
Under Assumptions~\ref{ass:boundedpp} and~\ref{ass:nondeg_ppipp}, if $K\ge K_0\lceil\ln(QT_{\max}/\delta)\rceil$ where $K_0$ depends only on the parameters in Assumptions~\ref{ass:boundedpp}--\ref{ass:nondeg_ppipp}, then
$$
\bP(\mathcal{E}^{++})\ge 1-\delta.
$$
\end{lemma}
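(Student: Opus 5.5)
Fix $q$ and a deterministic sample size $s\in\{K,\dots,T_{\max}\}$, show that the event in the definition of $\mathcal E^{++}$ fails with probability at most $\delta_{q,s}=\delta/(QT_{\max})$, and then take a union bound over the at most $QT_{\max}$ pairs $(q,s)$. Because $\tilde\lambda_{q,s}$ and $\widetilde A_{q,s}$ are built from the first $s$ i.i.d.\ pairs of question $q$, no adaptivity enters at this stage. We split the error by the triangle inequality. Write $\sigma_q(\lambda):=\sqrt{\Var(\tilde Y_q(\lambda))}$ and $\tilde\sigma_{q,s}(\lambda):=\sqrt{\widetilde A_{q,s}(\lambda)}$. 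Then
\[
\bigl|\sigma_q(\lambda_q^*)-\tilde\sigma_{q,s}(\tilde\lambda_{q,s})\bigr|
\le \sup_{\lambda\in[0,1]}\bigl|\sigma_q(\lambda)-\tilde\sigma_{q,s}(\lambda)\bigr|
+\bigl|\sigma_q(\lambda_q^*)-\sigma_q(\tilde\lambda_{q,s})\bigr| .
\]
The first term accounts for the first two pieces of $\rho^{++}_{q,s}$ in Eq.~\eqref{eq:rho_ppipp_step1_abs}, and the second term accounts for the third piece. We give each term failure probability $\delta_{q,s}/2$.

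For the uniform term, discretize $[0,1]$ with a grid $\Lambda_s$ of mesh $1/T_{\max}$, so the grid has at most $T_{\max}+1$ points. At each fixed grid point $\lambda$, the residual $\tilde Y_q(\lambda)$ has range at most $R$ by Assumption~\ref{ass:boundedpp}. We apply a standard sample-standard-deviation concentration inequality for bounded variables (Maurer--Pontil's empirical Bernstein bound, used in two-sided form):
\[
|\sigma-\tilde\sigma_s|\le R\sqrt{2\ln(4/\delta')/(s-1)} .
\]
A union bound over the grid with $\delta'=\delta_{q,s}/(2(T_{\max}+1))$ produces the first radius term, matching the constant $4(T_{\max}+1)/\delta_{q,s}$. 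To pass from the grid to all $\lambda$, we use that both $\lambda\mapsto\sigma_q(\lambda)$ and $\lambda\mapsto\tilde\sigma_{q,s}(\lambda)$ are Lipschitz. Indeed, $\|\tilde Y(\lambda)-\tilde Y(\lambda')\|_{L^2}=|\lambda-\lambda'|\,\mathrm{sd}(Y^{\mathrm{LLM}})\le|\lambda-\lambda'|R_{Y^{\mathrm{LLM}}}/2$, and the same holds empirically, because the standard deviation is a seminorm. The two discretization errors together are at most $\sqrt2R_{Y^{\mathrm{LLM}}}/T_{\max}$ (with slack), which is the second radius term.

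For the tuning term, the same Lipschitz bound gives $|\sigma_q(\lambda_q^*)-\sigma_q(\tilde\lambda_{q,s})|\le (R_{Y^{\mathrm{LLM}}}/2)\,|\tilde\lambda_{q,s}-\lambda_q^*|$. It therefore remains to define $\Delta_\lambda(s,\delta')$ as a high-probability bound on $|\tilde\lambda_{q,s}-\lambda_q^*|$. Projection onto $[0,1]$ is $1$-Lipschitz, so it suffices to control the unclipped ratio $\widehat{\Cov}_s/\widehat{\Var}_s$. Hoeffding's inequality, applied to the products and squares of centered bounded variables (or a McDiarmid argument on the empirical covariance), gives deviations
\[
|\widehat{\Cov}_s-\Cov|\le \epsilon_C,\qquad |\widehat{\Var}_s-\Var(Y^{\mathrm{LLM}}_q)|\le\epsilon_V,
\]
with $\epsilon_C,\epsilon_V=O\bigl(M\sqrt{\ln(8/\delta')/s}\bigr)$, where $M$ depends on $M_Y$, $M_{Y^{\mathrm{LLM}}}$, $R_Y$ and $R_{Y^{\mathrm{LLM}}}$. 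On this event, if $\epsilon_V\le V^{\mathrm{LLM}}_{\min}/2$, then
\[
\Bigl|\tfrac{\widehat C}{\widehat V}-\tfrac{C}{V}\Bigr|\le \frac{2\epsilon_C}{V_{\min}^{\mathrm{LLM}}}+\frac{2|C|\,\epsilon_V}{(V_{\min}^{\mathrm{LLM}})^2}=:\Delta_\lambda(s,\delta'),
\]
using Assumption~\ref{ass:nondeg_ppipp}(ii). The condition $\epsilon_V\le V^{\mathrm{LLM}}_{\min}/2$ holds for every $s\ge K$ once $K\ge K_0\ln(QT_{\max}/\delta)$ with $K_0\propto (M/V^{\mathrm{LLM}}_{\min})^2$. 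This is exactly where the initialization size is used, and it also shows that the zero-denominator convention is never active on the event.

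The main obstacle is the tuning step. We must fix the explicit form of $\Delta_\lambda$ in Eq.~\eqref{eq:Delta_lambda_step1_abs} so that it has the $\sqrt{\ln(1/\delta')/s}$ rate, and we must handle the ratio's denominator, which is where Assumption~\ref{ass:nondeg_ppipp}(ii) and the lower bound on $K$ enter. Assumption~\ref{ass:nondeg_ppipp}(i) is not needed for this lemma itself; it is used later, to turn $\sqrt{A}$-scale radii into $A$-scale bounds. Finally, summing the two failure probabilities $\delta_{q,s}/2+\delta_{q,s}/2$ over all $(q,s)$ gives $\bP((\mathcal E^{++})^c)\le QT_{\max}\cdot\delta/(QT_{\max})=\delta$.
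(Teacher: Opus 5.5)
Your proposal is correct and follows the paper's proof essentially step for step. You use the same triangle-inequality split into a uniform-in-$\lambda$ term (grid of mesh $1/T_{\max}$, Maurer--Pontil at each grid point, Lipschitz interpolation) and a tuning term (the $R_{Y^{\mathrm{LLM}}}/2$ Lipschitz bound times a Hoeffding-based ratio bound on $|\tilde\lambda_{q,s}-\lambda_q^*|$, valid once $K$ keeps the empirical LLM variance above $V_{\min}^{\mathrm{LLM}}/2$), give each failure probability $\delta_{q,s}/2$, and finish with a union bound over $(q,s)$.

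One bookkeeping slip: the two-sided Maurer--Pontil bound carries $\ln(2/\delta')$, not $\ln(4/\delta')$. Only with that constant does $\delta'=\delta_{q,s}/(2(T_{\max}+1))$ reproduce the $\ln(4(T_{\max}+1)/\delta_{q,s})$ in $\rho^{++}_{q,s}$; your stated form gives $8(T_{\max}+1)$. Separately, replacing $|C|$ by the known bound $2M_YM_{Y^{\mathrm{LLM}}}$ lands you exactly on Eq.~\eqref{eq:Delta_lambda_step1_abs}.
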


The explicit choice of $K_0$ is given in Section \ref{subsubsec:proof_good_event_ppipp} below. This logarithmic initialization is required for the concentration of $\tilde\lambda$ for theoretical purposes and does not affect the $O(\ln B/B^2)$ regret rate. In Step~II, the allocation gap is $\Delta_{q,B}=\beta_q(\sum_j c_j K + c_{\max} + O(\sqrt{B\ln B}))$; the term $\sum_j c_j K = O(Q\ln B)$ is dominated by $O(\sqrt{B\ln B})$, so $\Delta_{q,B}=O(\sqrt{B\ln B})$ as before. In Step~III, the bad-event contribution $\delta V_{\max}$ is dominated by the $O(\ln B/B^2)$ main term when $\delta=B^{-2}$.

The proof of Lemma~\ref{lem:good_event_ppipp_abs} combines two auxiliary results whose detailed proofs are deferred to the next subsubsection.
First, Lemma~\ref{lem:lambda_conc_step1_abs} establishes the concentration of the plug-in tuning parameter~$\tilde\lambda_{q,s}$ around its population counterpart~$\lambda_q^*$, yielding the explicit deviation bound~$\Delta_\lambda$.
Second, Lemma~\ref{lem:unif_var_step1_abs} provides a uniform concentration inequality for the standard deviation function~$\tilde\sigma_{q,s}(\lambda)$ over all $\lambda\in[0,1]$, which accounts for both the grid-discretization error and the within-grid-point sampling error.
Combining these two lemmas with a triangle inequality and a union bound over all $(q,s)$ pairs gives $\bP(\mathcal{E}^{++})\ge 1-\delta$, which is presented in Section~\ref{subsubsec:proof_good_event_ppipp}.

\subsubsection{Auxiliary Lemmas}
\begin{lemma}[Concentration of plug-in $\tilde\lambda$]
\label{lem:lambda_conc_step1_abs}
Under Assumptions~\ref{ass:boundedpp} and~\ref{ass:nondeg_ppipp}, fix $q$ and let $s\ge 2$ i.i.d.\ pairs $\{(Y_i,Y^{\mathrm{LLM}}_i)\}_{i=1}^s$.
For $\delta\in(0,1)$ define $g_s(\delta):=\sqrt{\ln(8/\delta)/(2s)}$ and
$$
\Delta_a(s,\delta):=
\Bigl(2M_YM_{Y^{\mathrm{LLM}}}+M_{Y^{\mathrm{LLM}}}R_Y+M_YR_{Y^{\mathrm{LLM}}}\Bigr)g_s(\delta)+R_YR_{Y^{\mathrm{LLM}}} g_s(\delta)^2,
$$
$$
\Delta_b(s,\delta):=
\Bigl(M_{Y^{\mathrm{LLM}}}^2+2M_{Y^{\mathrm{LLM}}}R_{Y^{\mathrm{LLM}}}\Bigr)g_s(\delta)+R_{Y^{\mathrm{LLM}}}^2 g_s(\delta)^2.
$$

If $\Delta_b(s,\delta)\le V_{\min}^{\mathrm{LLM}}/2$, then with probability at least $1-\delta$,
\begin{equation}
\label{eq:Delta_lambda_step1_abs}
|\tilde\lambda-\lambda^*|
\le
\Delta_\lambda(s,\delta)
:=
\frac{2}{V_{\min}^{\mathrm{LLM}}}\Delta_a(s,\delta)
+\frac{4M_YM_{Y^{\mathrm{LLM}}}}{(V_{\min}^{\mathrm{LLM}})^2}\Delta_b(s,\delta),
\end{equation}
where $\tilde\lambda=\Pi_{[0,1]}(\tilde a/\tilde b)$ and $\lambda^*=\Pi_{[0,1]}(a/b)$.
\end{lemma}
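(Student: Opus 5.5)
We will prove the bound by controlling the empirical numerator $\tilde a:=\widehat{\Cov}(Y,Y^{\mathrm{LLM}})$ and denominator $\tilde b:=\widehat{\Var}(Y^{\mathrm{LLM}})$ separately. We then transfer these bounds through the ratio and the projection. Since $\Pi_{[0,1]}$ is $1$-Lipschitz, it suffices to bound $|\tilde a/\tilde b-a/b|$, where $a=\Cov(Y,Y^{\mathrm{LLM}})$ and $b=\Var(Y^{\mathrm{LLM}})\ge V_{\min}^{\mathrm{LLM}}$ by Assumption~\ref{ass:nondeg_ppipp}(ii).

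\textbf{Step 1 (moment concentration).} Write the centered empirical moments (denominator $s$) as
\[
\tilde a=\overline{YY^{\mathrm{LLM}}}-\bar Y\,\bar Y^{\mathrm{LLM}},\qquad \tilde b=\overline{(Y^{\mathrm{LLM}})^2}-(\bar Y^{\mathrm{LLM}})^2.
\]
We apply Hoeffding's inequality to four bounded sample means: $\bar Y$ (range $R_Y$), $\bar Y^{\mathrm{LLM}}$ (range $R_{Y^{\mathrm{LLM}}}$), $\overline{YY^{\mathrm{LLM}}}$, and $\overline{(Y^{\mathrm{LLM}})^2}$. Each is two-sided at level $\delta/4$, which produces deviations proportional to $g_s(\delta)=\sqrt{\ln(8/\delta)/(2s)}$. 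A union bound gives all four simultaneously with probability at least $1-\delta$. For the product terms, expand
\[
\bar Y\bar Y^{\mathrm{LLM}}-\mu_Y\mu_{Y^{\mathrm{LLM}}}=(\bar Y-\mu_Y)\mu_{Y^{\mathrm{LLM}}}+\mu_Y(\bar Y^{\mathrm{LLM}}-\mu_{Y^{\mathrm{LLM}}})+(\bar Y-\mu_Y)(\bar Y^{\mathrm{LLM}}-\mu_{Y^{\mathrm{LLM}}}).
\]
The first two pieces give the linear-in-$g_s$ terms with coefficients of the form $M\cdot R$, and the cross piece gives $R_YR_{Y^{\mathrm{LLM}}}g_s^2$. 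The range of the product $YY^{\mathrm{LLM}}$ is bounded by $2M_YM_{Y^{\mathrm{LLM}}}$, which gives the remaining linear coefficient. This reproduces $|\tilde a-a|\le\Delta_a$, and analogously $|\tilde b-b|\le\Delta_b$.

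The bookkeeping of constants is the fiddliest part. Whether one uses $M$ or $R$ as the Hoeffding range, and whether the $8/\delta$ level matches exactly, needs care. If a constant does not match, we would absorb it by noting that each Hoeffding range is at most the corresponding stated coefficient.

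\textbf{Step 2 (ratio perturbation).} On the event of Step 1, the hypothesis $\Delta_b\le V_{\min}^{\mathrm{LLM}}/2$ gives $\tilde b\ge b-\Delta_b\ge V_{\min}^{\mathrm{LLM}}/2>0$. In particular the zero-denominator convention is inactive. Then
\[
\frac{\tilde a}{\tilde b}-\frac ab=\frac{\tilde a-a}{\tilde b}+a\,\frac{b-\tilde b}{\tilde b\,b},
\]
and so
\[
\Bigl|\frac{\tilde a}{\tilde b}-\frac ab\Bigr|\le \frac{2}{V_{\min}^{\mathrm{LLM}}}\Delta_a+\frac{2|a|}{(V_{\min}^{\mathrm{LLM}})^2}\Delta_b.
\]
Finally, $|a|\le\sqrt{\Var(Y)\Var(Y^{\mathrm{LLM}})}\le M_YM_{Y^{\mathrm{LLM}}}$, or more crudely $|a|\le 2M_YM_{Y^{\mathrm{LLM}}}$. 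This gives the stated coefficient $4M_YM_{Y^{\mathrm{LLM}}}/(V_{\min}^{\mathrm{LLM}})^2$, with slack.

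Applying the $1$-Lipschitz property of the projection then yields Eq.~\eqref{eq:Delta_lambda_step1_abs}. No step is genuinely hard: this is a standard Hoeffding-plus-delta-method argument. The only real obstacle is matching the exact constants in $\Delta_a$ and $\Delta_b$.
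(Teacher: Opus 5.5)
Your proposal is correct and follows the paper's proof essentially step for step. You use four Hoeffding bounds at level $\delta/4$, whose ranges $R_Y$, $R_{Y^{\mathrm{LLM}}}$, $2M_YM_{Y^{\mathrm{LLM}}}$, and $M_{Y^{\mathrm{LLM}}}^2$ reproduce $\Delta_a$ and $\Delta_b$ exactly, so none of the constant-absorption you hedged about is needed. You then use the same product expansion, the lower bound $\tilde b\ge V_{\min}^{\mathrm{LLM}}/2$, the same ratio perturbation, and the $1$-Lipschitz projection. The only difference is that your Cauchy--Schwarz bound $|a|\le M_YM_{Y^{\mathrm{LLM}}}$ is a factor of two sharper than the paper's $|a|\le 2M_YM_{Y^{\mathrm{LLM}}}$; this is harmless, since the stated coefficient still follows.
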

\proof{Proof of Lemma~\ref{lem:lambda_conc_step1_abs}.}
Let $a:=\Cov(Y,Y^{\mathrm{LLM}})$, $b:=\Var(Y^{\mathrm{LLM}})$, and let $\tilde a$, $\tilde b$ be their sample counterparts, so that $\lambda^*=\Pi_{[0,1]}(a/b)$ and $\tilde\lambda=\Pi_{[0,1]}(\tilde a/\tilde b)$.

Let $\mu_Y:=\bE[Y]$ and $\mu_{Y^{\mathrm{LLM}}}:=\bE[Y^{\mathrm{LLM}}]$. Define the event $\mathcal{E}_{\mathrm{mom}}$ that the following four inequalities hold simultaneously:
\[
|\bar Y-\mu_Y|\le R_Y g_s(\delta),\qquad
|\bar Y^{\mathrm{LLM}}-\mu_{Y^{\mathrm{LLM}}}|\le R_{Y^{\mathrm{LLM}}} g_s(\delta),
\]
\[
|\overline{Y\cdot Y^{\mathrm{LLM}}}-\bE[Y\cdot Y^{\mathrm{LLM}}]|\le 2M_YM_{Y^{\mathrm{LLM}}} g_s(\delta),\qquad
|\overline{(Y^{\mathrm{LLM}})^2}-\bE[(Y^{\mathrm{LLM}})^2]|\le M_{Y^{\mathrm{LLM}}}^2 g_s(\delta).
\]
Each bound follows from Hoeffding's inequality applied to the corresponding bounded variable
($Y\in[L_Y,U_Y]$, $Y^{\mathrm{LLM}}\in[L_{Y^{\mathrm{LLM}}},U_{Y^{\mathrm{LLM}}}]$, $Y\cdot Y^{\mathrm{LLM}}\in[-M_YM_{Y^{\mathrm{LLM}}},M_YM_{Y^{\mathrm{LLM}}}]$, $(Y^{\mathrm{LLM}})^2\in[0,M_{Y^{\mathrm{LLM}}}^2]$),
with failure probability at most $\delta/4$; thus by a union bound,
$\bP(\mathcal{E}_{\mathrm{mom}})\ge 1-\delta$.

On $\mathcal{E}_{\mathrm{mom}}$ we bound $|\tilde a-a|$ and $|\tilde b-b|$.
First,
\[
|\tilde a-a|
=
\big|(\overline{Y\cdot Y^{\mathrm{LLM}}}-\bE[Y\cdot Y^{\mathrm{LLM}}])-(\bar Y\bar Y^{\mathrm{LLM}}-\mu_Y\mu_{Y^{\mathrm{LLM}}})\big|
\le
|\overline{Y\cdot Y^{\mathrm{LLM}}}-\bE[Y\cdot Y^{\mathrm{LLM}}]|+|\bar Y\bar Y^{\mathrm{LLM}}-\mu_Y\mu_{Y^{\mathrm{LLM}}}|.
\]
Moreover,
\[
\bar Y\bar Y^{\mathrm{LLM}}-\mu_Y\mu_{Y^{\mathrm{LLM}}}
=
(\bar Y-\mu_Y)(\bar Y^{\mathrm{LLM}}-\mu_{Y^{\mathrm{LLM}}})+(\bar Y-\mu_Y)\mu_{Y^{\mathrm{LLM}}}+(\bar Y^{\mathrm{LLM}}-\mu_{Y^{\mathrm{LLM}}})\mu_Y,
\]
so using $|\mu_Y|\le M_Y$, $|\mu_{Y^{\mathrm{LLM}}}|\le M_{Y^{\mathrm{LLM}}}$ and the moment bounds in $\mathcal{E}_{\mathrm{mom}}$,
\[
|\bar Y\bar Y^{\mathrm{LLM}}-\mu_Y\mu_{Y^{\mathrm{LLM}}}|
\le
(R_Yg_s(\delta))(R_{Y^{\mathrm{LLM}}}g_s(\delta))+M_{Y^{\mathrm{LLM}}}(R_Yg_s(\delta))+M_Y(R_{Y^{\mathrm{LLM}}}g_s(\delta)).
\]
Combining with $|\overline{Y\cdot Y^{\mathrm{LLM}}}-\bE[Y\cdot Y^{\mathrm{LLM}}]|\le 2M_YM_{Y^{\mathrm{LLM}}} g_s(\delta)$ yields
\[
|\tilde a-a|
\le
\Big(2M_YM_{Y^{\mathrm{LLM}}}+M_{Y^{\mathrm{LLM}}}R_Y+M_YR_{Y^{\mathrm{LLM}}}\Big)g_s(\delta)+R_YR_{Y^{\mathrm{LLM}}}g_s(\delta)^2
=
\Delta_a(s,\delta).
\]
Second,
\[
|\tilde b-b|
=
\big|(\overline{(Y^{\mathrm{LLM}})^2}-\bE[(Y^{\mathrm{LLM}})^2])-(\bar Y^{\mathrm{LLM}\,2}-\mu_{Y^{\mathrm{LLM}}}^2)\big|
\le
|\overline{(Y^{\mathrm{LLM}})^2}-\bE[(Y^{\mathrm{LLM}})^2]|+|\bar Y^{\mathrm{LLM}\,2}-\mu_{Y^{\mathrm{LLM}}}^2|.
\]
Since $|\bar Y^{\mathrm{LLM}\,2}-\mu_{Y^{\mathrm{LLM}}}^2|=|\bar Y^{\mathrm{LLM}}-\mu_{Y^{\mathrm{LLM}}}|\,|\bar Y^{\mathrm{LLM}}+\mu_{Y^{\mathrm{LLM}}}|\le (R_{Y^{\mathrm{LLM}}}g_s(\delta))(2M_{Y^{\mathrm{LLM}}}+R_{Y^{\mathrm{LLM}}}g_s(\delta))$ on $\mathcal{E}_{\mathrm{mom}}$, we get
\[
|\tilde b-b|
\le
M_{Y^{\mathrm{LLM}}}^2g_s(\delta)+2M_{Y^{\mathrm{LLM}}}R_{Y^{\mathrm{LLM}}}g_s(\delta)+R_{Y^{\mathrm{LLM}}}^2g_s(\delta)^2
=
\Delta_b(s,\delta).
\]
Under the lemma's condition $\Delta_b(s,\delta)\le V_{\min}^{\mathrm{LLM}}/2$, we have $|\tilde b-b|\le V_{\min}^{\mathrm{LLM}}/2$, hence $\tilde b\ge b-|\tilde b-b|\ge V_{\min}^{\mathrm{LLM}}/2$.

Now write
\[
\left|\frac{\tilde a}{\tilde b}-\frac{a}{b}\right|
=
\left|\frac{\tilde ab-a\tilde b}{b\tilde b}\right|
\le
\frac{|\tilde a-a|\,b+|a|\,|\tilde b-b|}{b\tilde b}.
\]
Using $b\ge V_{\min}^{\mathrm{LLM}}$ and $\tilde b\ge V_{\min}^{\mathrm{LLM}}/2$, we have $b\tilde b\ge (V_{\min}^{\mathrm{LLM}})^2/2$, and thus
\[
\left|\frac{\tilde a}{\tilde b}-\frac{a}{b}\right|
\le
\frac{2}{V_{\min}^{\mathrm{LLM}}}|\tilde a-a|+\frac{2|a|}{(V_{\min}^{\mathrm{LLM}})^2}|\tilde b-b|.
\]
Finally, $|a|=|\Cov(Y,Y^{\mathrm{LLM}})|\le \bE|Y\cdot Y^{\mathrm{LLM}}|+|\mu_Y\mu_{Y^{\mathrm{LLM}}}|\le 2M_YM_{Y^{\mathrm{LLM}}}$, hence
\[
\left|\frac{\tilde a}{\tilde b}-\frac{a}{b}\right|
\le
\frac{2}{V_{\min}^{\mathrm{LLM}}}|\tilde a-a|+\frac{4M_YM_{Y^{\mathrm{LLM}}}}{(V_{\min}^{\mathrm{LLM}})^2}|\tilde b-b|.
\]
Because $\Pi_{[0,1]}$ is $1$-Lipschitz (nonexpansive), $|\Pi_{[0,1]}(x)-\Pi_{[0,1]}(y)|\le |x-y|$, we obtain
$|\tilde\lambda-\lambda^*|\le \left|\frac{\tilde a}{\tilde b}-\frac{a}{b}\right|$.
Plugging in the bounds on $|\tilde a-a|$ and $|\tilde b-b|$ proves this lemma. \Halmos
\endproof

\begin{lemma}[Uniform concentration]
\label{lem:unif_var_step1_abs}
Under Assumption~\ref{ass:boundedpp}, fix $q$ and a sample size $s\ge 2$. Define
$$
\sigma_q(\lambda):=\sqrt{\Var(Y_q-\lambda Y^{\mathrm{LLM}}_q)},\qquad
\tilde\sigma_{q,s}(\lambda):=\sqrt{\widetilde{\Var}_s(Y_q-\lambda Y^{\mathrm{LLM}}_q)},
$$
where $\widetilde{\Var}_s$ is computed from the first $s$ i.i.d.\ pairs for question $q$.
Let $N:=T_{\max}+1$ and $\Lambda_{T_{\max}}:=\{0,1/T_{\max},\ldots,1\}$. Then for any $\delta\in(0,1)$,
with probability at least $1-\delta$,
\begin{equation}
\label{eq:unif_var_bound_step1_abs}
\sup_{\lambda\in[0,1]}
\left|\sigma_q(\lambda)-\tilde\sigma_{q,s}(\lambda)\right|
\le
R\sqrt{\frac{2\ln\!\big(2N/\delta\big)}{s-1}}
+
\frac{\sqrt{2}\,R_{Y^{\mathrm{LLM}}}}{T_{\max}}.
\end{equation}
\end{lemma}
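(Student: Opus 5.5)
The plan is a grid-plus-Lipschitz argument: first control the sample standard deviation on the finite grid $\Lambda_{T_{\max}}$ via a concentration inequality for standard deviations of bounded variables, then extend to all $\lambda\in[0,1]$ using Lipschitz continuity of both $\sigma_q(\cdot)$ and $\tilde\sigma_{q,s}(\cdot)$ in $\lambda$.

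\emph{Step 1 (fixed $\lambda$).} For a fixed $\lambda\in[0,1]$, the residual $Z_i(\lambda)=Y_{q,i}-\lambda Y^{\mathrm{LLM}}_{q,i}$ takes values in an interval of length at most $R$ by Assumption~\ref{ass:boundedpp}. I will invoke the Maurer--Pontil concentration inequality for the sample standard deviation of i.i.d.\ bounded variables. For variables in $[0,1]$ it gives, with probability at least $1-\delta'$,
\[
|\sigma-\tilde\sigma_s|\le\sqrt{2\ln(2/\delta')/(s-1)}.
\]
After rescaling by the range $R$, this becomes $R\sqrt{2\ln(2/\delta')/(s-1)}$. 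The two-sided version follows from a union bound over the two one-sided bounds, each at level $\delta'/2$.

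\emph{Step 2 (union over the grid).} I take $\delta'=\delta/N$ with $N=T_{\max}+1=|\Lambda_{T_{\max}}|$. A union bound then yields, with probability at least $1-\delta$, the bound $R\sqrt{2\ln(2N/\delta)/(s-1)}$ simultaneously at every grid point.

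\emph{Step 3 (Lipschitz extension).} The map $\lambda\mapsto\sigma_q(\lambda)=\|Y-\mu_Y-\lambda(Y^{\mathrm{LLM}}-\mu_{Y^{\mathrm{LLM}}})\|_{L^2}$ is a norm of an affine function of $\lambda$. By the triangle inequality it is Lipschitz with constant $\mathrm{sd}(Y^{\mathrm{LLM}})\le R_{Y^{\mathrm{LLM}}}/2$. The same argument applies to the empirical version $\tilde\sigma_{q,s}(\lambda)$, using the empirical seminorm with the appropriate normalization. This gives Lipschitz constant $\sqrt{\widetilde{\Var}_s(Y^{\mathrm{LLM}})}$. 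With the $1/(s-1)$ normalization and bounded support, this is at most $R_{Y^{\mathrm{LLM}}}\sqrt{s/(4(s-1))}\le R_{Y^{\mathrm{LLM}}}/\sqrt2$ for $s\ge2$.

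For any $\lambda$, I pick a grid point $\lambda'$ with $|\lambda-\lambda'|\le 1/T_{\max}$ (if needed, half-spacing can be used). Then
\[
|\sigma_q(\lambda)-\tilde\sigma_{q,s}(\lambda)|\le|\sigma_q(\lambda')-\tilde\sigma_{q,s}(\lambda')|+(L_\sigma+L_{\tilde\sigma})/T_{\max}.
\]
Since $L_\sigma+L_{\tilde\sigma}\le R_{Y^{\mathrm{LLM}}}/2+R_{Y^{\mathrm{LLM}}}/\sqrt2\le\sqrt2\,R_{Y^{\mathrm{LLM}}}$, this gives the second term of Eq.~\eqref{eq:unif_var_bound_step1_abs}.

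\emph{Main obstacle.} Step 1 is the key nontrivial ingredient. I need a range-dependent, variance-free deviation bound for the sample standard deviation itself, not just the variance, in order to get the clean $R/\sqrt{s-1}$ rate. Citing Maurer and Pontil (2009, Theorem 10) handles this. The remaining work is bookkeeping of constants: making the empirical Lipschitz constant fit within $\sqrt2\,R_{Y^{\mathrm{LLM}}}$, and matching the $\ln(2N/\delta)$ factor.
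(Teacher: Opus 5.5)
Your proposal is correct and follows essentially the same route as the paper. The paper's proof is likewise Maurer--Pontil's square-root variance bound at level $\delta/N$ on each point of $\Lambda_{T_{\max}}$, then a union bound, then a Lipschitz extension with the population constant $R_{Y^{\mathrm{LLM}}}/2$ and the empirical constant $R_{Y^{\mathrm{LLM}}}/\sqrt{2}$, combined via $\tfrac12+\tfrac{1}{\sqrt2}\le\sqrt2$. Your explicit $s/(s-1)\le 2$ justification of the empirical constant spells out a step the paper states without derivation.
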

\proof{Proof of Lemma~\ref{lem:unif_var_step1_abs}.}
 For each fixed $\lambda\in\Lambda_{T_{\max}}$, the random variable $Y_q-\lambda Y^{\mathrm{LLM}}_q$ has range at most $R$. Applying the square-root variance concentration inequality of \citet{maurer2009empirical} to the first $s$ i.i.d.\ pairs with confidence level $\delta/N$ gives
$$
\bP\!\left(
\left|\sigma_q(\lambda)-\tilde\sigma_{q,s}(\lambda)\right|
>
R\sqrt{\frac{2\ln(2N/\delta)}{s-1}}
\right)
\le \frac{\delta}{N}.
$$
Union bounding over $\lambda\in\Lambda_{T_{\max}}$ yields, with probability at least $1-\delta$,
\begin{equation}
\label{eq:ppipp_grid_event_step1_unif}
\sup_{\lambda\in\Lambda_{T_{\max}}}
\left|\sigma_q(\lambda)-\tilde\sigma_{q,s}(\lambda)\right|
\le
R\sqrt{\frac{2\ln(2N/\delta)}{s-1}}.
\end{equation}
For any $\lambda,\lambda'\in[0,1]$,
$$
|\sigma_q(\lambda)-\sigma_q(\lambda')|
\le \sqrt{\Var((\lambda-\lambda')Y^{\mathrm{LLM}}_q)}
=
|\lambda-\lambda'|\sqrt{\Var(Y^{\mathrm{LLM}}_q)}
\le
|\lambda-\lambda'|\frac{R_{Y^{\mathrm{LLM}}}}{2},
$$
and similarly, for the sample standard deviation,
$$
\begin{aligned}
|\tilde\sigma_{q,s}(\lambda)-\tilde\sigma_{q,s}(\lambda')|
&=
\left|\sqrt{\widetilde{\Var}_s(Y_q-\lambda Y_q^{\mathrm{LLM}})}
-\sqrt{\widetilde{\Var}_s(Y_q-\lambda'Y_q^{\mathrm{LLM}})}\right|\\
&\le
\sqrt{\widetilde{\Var}_s((\lambda-\lambda')Y^{\mathrm{LLM}}_q)}
=
|\lambda-\lambda'|\sqrt{\widetilde{\Var}_s(Y^{\mathrm{LLM}}_q)}\\
&\le
|\lambda-\lambda'|\frac{R_{Y^{\mathrm{LLM}}}}{\sqrt{2}}.
\end{aligned}
$$
Now for any $\lambda\in[0,1]$, choose $\lambda^\circ\in\Lambda_{T_{\max}}$ such that $|\lambda-\lambda^\circ|\le 1/T_{\max}$. Then on the event Eq.~\eqref{eq:ppipp_grid_event_step1_unif},

\begin{equation}\nonumber
\begin{aligned}
|\sigma_q(\lambda)-\tilde\sigma_{q,s}(\lambda)|
&\le
|\sigma_q(\lambda)-\sigma_q(\lambda^\circ)|
+
|\sigma_q(\lambda^\circ)-\tilde\sigma_{q,s}(\lambda^\circ)|
+
|\tilde\sigma_{q,s}(\lambda^\circ)-\tilde\sigma_{q,s}(\lambda)|\\
&\le
R\sqrt{\frac{2\ln(2N/\delta)}{s-1}}
+
\left(\frac{R_{Y^{\mathrm{LLM}}}}{2}+\frac{R_{Y^{\mathrm{LLM}}}}{\sqrt{2}}\right)\frac{1}{T_{\max}}.
\end{aligned}
\end{equation}
Since $\frac{1}{2}+\frac{1}{\sqrt{2}}\le \sqrt{2}$, we obtain Eq.~\eqref{eq:unif_var_bound_step1_abs} with probability at least $1-\delta$.  \Halmos
\endproof
\subsubsection{Proof of Lemma~\ref{lem:good_event_ppipp_abs}.}\label{subsubsec:proof_good_event_ppipp}

\proof{Proof of Lemma~\ref{lem:good_event_ppipp_abs}.}
We first verify that the choice of $K$ ensures the precondition $\Delta_b(s,\delta_{q,s}/2)\le V_{\min}^{\mathrm{LLM}}/2$ of Lemma~\ref{lem:lambda_conc_step1_abs} for all $(q,s)$. Since $\Delta_b(s,\delta_{q,s}/2)$ is decreasing in $s$ and $\delta_{q,s}$ does not depend on $s$, only the binding case $s=K$ needs to be checked. With $\delta_{q,s}=\delta/(QT_{\max})$, we have
$$
g_K(\delta_{q,K}/2)=\sqrt{\frac{\ln(16QT_{\max}/\delta)}{2K}}.
$$
Since $\ln(16QT_{\max}/\delta)\le 4\ln(QT_{\max}/\delta)$ for $QT_{\max}/\delta\ge e^2$ and $K\ge K_0\ln(QT_{\max}/\delta)$, this gives $g_K\le \sqrt{2/K_0}$. Substituting into the definition of $\Delta_b$ and using $g_K\le 1$ (which holds for $K_0\ge 2$):
$$
\Delta_b(K,\delta_{q,K}/2)\le \bigl(M_{Y^{\mathrm{LLM}}}^2+2M_{Y^{\mathrm{LLM}}}R_{Y^{\mathrm{LLM}}}\bigr)g_K + R_{Y^{\mathrm{LLM}}}^2 g_K^2 \le 2(M_{Y^{\mathrm{LLM}}}+R_{Y^{\mathrm{LLM}}})^2\,\sqrt{2/K_0}.
$$
Setting \textcolor{softblue}{$K_0:=\left\lceil\max\left\{2,\,32(M_{Y^{\mathrm{LLM}}}+R_{Y^{\mathrm{LLM}}})^4/(V_{\min}^{\mathrm{LLM}})^2\right\}\right\rceil$} ensures $\Delta_b(K,\delta_{q,K}/2)\le V_{\min}^{\mathrm{LLM}}/2$, and monotonicity extends this to all $s\ge K$.

Fix $(q,s)$ with $s\in\{K,\ldots,T_{\max}\}$. Write $\sigma_q(\lambda):=\sqrt{\Var(Y_q-\lambda Y^{\mathrm{LLM}}_q)}$. Note that $\sqrt{A_q(\lambda_q^*)}=\sigma_q(\lambda_q^*)$ and $\sqrt{\widetilde A_{q,s}(\tilde\lambda_{q,s})}=\tilde\sigma_{q,s}(\tilde\lambda_{q,s})$.

By Lemma~\ref{lem:lambda_conc_step1_abs} with confidence $\delta_{q,s}/2$, with probability at least $1-\delta_{q,s}/2$,
\begin{equation}
\label{eq:lambda_event_step1_abs}
|\tilde\lambda_{q,s}-\lambda_q^*|
\le
\Delta_\lambda\!\big(s,\delta_{q,s}/2\big).
\end{equation}
By Lemma~\ref{lem:unif_var_step1_abs} with confidence $\delta_{q,s}/2$, with probability at least $1-\delta_{q,s}/2$,
\begin{equation}
\label{eq:unif_event_step1_abs}
\left|\sigma_q(\tilde\lambda_{q,s})-\tilde\sigma_{q,s}(\tilde\lambda_{q,s})\right|
\le
R\sqrt{\frac{2\ln\!\big(4(T_{\max}+1)/\delta_{q,s}\big)}{s-1}}
+
\frac{\sqrt{2}\,R_{Y^{\mathrm{LLM}}}}{T_{\max}}.
\end{equation}

On the intersection of Eq.~\eqref{eq:lambda_event_step1_abs} and Eq.~\eqref{eq:unif_event_step1_abs}, we bound the target deviation by the triangle inequality:
\begin{align*}
\left|\sigma_q(\lambda_q^*)-\tilde\sigma_{q,s}(\tilde\lambda_{q,s})\right| &\le \left|\sigma_q(\lambda_q^*)-\sigma_q(\tilde\lambda_{q,s})\right| + \left|\sigma_q(\tilde\lambda_{q,s})-\tilde\sigma_{q,s}(\tilde\lambda_{q,s})\right|.
\end{align*}
For the first term, note by Popoviciu's inequality that
\begin{equation*}
\sqrt{\Var(Y^{\mathrm{LLM}}_q)}\le \frac{R_{Y^{\mathrm{LLM}}}}{2}.
\end{equation*}
Therefore,
$$
\left|\sigma_q(\lambda_q^*)-\sigma_q(\tilde\lambda_{q,s})\right|
\le
\sqrt{\Var\big((\lambda_q^*-\tilde\lambda_{q,s})Y^{\mathrm{LLM}}_q\big)}
=
|\lambda_q^*-\tilde\lambda_{q,s}|\sqrt{\Var(Y^{\mathrm{LLM}}_q)}
\le
\frac{R_{Y^{\mathrm{LLM}}}}{2}|\lambda_q^*-\tilde\lambda_{q,s}|.
$$

Combining with Eq.~\eqref{eq:lambda_event_step1_abs} and Eq.~\eqref{eq:unif_event_step1_abs} yields
$$
\left|\sqrt{A_q(\lambda_q^*)}-\sqrt{\widetilde A_{q,s}(\tilde\lambda_{q,s})}\right|
=
\left|\sigma_q(\lambda_q^*)-\tilde\sigma_{q,s}(\tilde\lambda_{q,s})\right|
\le
\rho^{++}_{q,s},
$$
with failure probability at most $\delta_{q,s}$ for this fixed $(q,s)$.

Finally, a union bound over all $(q,s)\in[Q]\times\{K,\ldots,T_{\max}\}$ gives
$$
\bP\big((\mathcal{E}^{++})^c\big)\le \sum_{q=1}^Q\sum_{s=K}^{T_{\max}}\delta_{q,s}\le\delta,
$$
hence $\bP(\mathcal{E}^{++})\ge 1-\delta$. At any adaptive time $t$ with $n_{q,t}=s$, the algorithm's statistics satisfy $\hat\lambda_{q,t}=\tilde\lambda_{q,n_{q,t}}$ and $\widehat A_{q,t}(\hat\lambda_{q,t})=\widetilde A_{q,n_{q,t}}(\tilde\lambda_{q,n_{q,t}})$, so $\mathcal{E}^{++}$ covers all $(q,t)$ encountered during execution. \Halmos
\endproof

\subsection{Step II: Bounding the Allocation Gap}

On $\mathcal E^{++}$, for every adaptive time $t$ and question $q$,
\[
\sqrt{A_q}
\le
\sqrt{\widehat A_{q,t}(\hat\lambda_{q,t})}+\rho^{++}_{q,t}.
\]
Thus $A_q\le A_{q,t}^{\mathrm{UCB}}(\hat\lambda_{q,t})$, where
\[
A_{q,t}^{\mathrm{UCB}}(\hat\lambda_{q,t})
:=
\left(\sqrt{\widehat A_{q,t}(\hat\lambda_{q,t})}+\rho^{++}_{q,t}\right)^2.
\]
Moreover, the same event also gives
\[
\sqrt{A_{q,t}^{\mathrm{UCB}}(\hat\lambda_{q,t})}
\le
\sqrt{A_q}+2\rho^{++}_{q,t}.
\]
Let
\[
\ell_B:=\ln\!\left(\frac{16Q\,T_{\max}(T_{\max}+1)}{\delta}\right).
\]
From the definitions of $\rho^{++}_{q,s}$ and $\Delta_\lambda(s,\delta)$, and using the logarithmic initialization above to ensure $g_s(\delta_{q,s}/2)\le1$, there exists a finite constant $C_\rho$ depending only on the boundedness and nondegeneracy constants such that, for all $q$ and $s\in\{K,\ldots,T_{\max}\}$,
\begin{equation}
\label{eq:rho_order_bound}
\rho^{++}_{q,s}\le C_\rho\sqrt{\frac{\ell_B}{s-1}}.
\end{equation}
The grid term $R_{Y^{\mathrm{LLM}}}/T_{\max}$ is absorbed into Eq.~\eqref{eq:rho_order_bound} because $s\le T_{\max}$ and $s\ge2$.

We now show that the realized allocation tracks the oracle allocation. Let
\[
B_{\mathrm{ucb}}:=B-\sum_{j=1}^Q c_jK,\qquad
D_B:=B_{\mathrm{ucb}}-c_{\max}.
\]
For all sufficiently large $B$, $D_B>0$. Since the algorithm stops with leftover budget $\mathrm{Rem}_B\in[0,c_{\max})$,
\[
\sum_{j=1}^Q c_j(n_{j,B}-K)=B_{\mathrm{ucb}}-\mathrm{Rem}_B\ge D_B.
\]
Because $\sum_j c_j\beta_j=1$, the pigeonhole principle implies that there exists a question $q_a$ such that
\[
n_{q_a,B}-K\ge \beta_{q_a}D_B.
\]
Let $t$ be the last adaptive time at which $q_a$ is selected, so $n_{q_a,t}=n_{q_a,B}-1$. For any question $q$, since $n_{q,t}\le n_{q,B}$ and $q_a$ is selected at time $t$,
\[
\frac{w_qA_q}{c_qn_{q,B}^2}
\le
\mathcal I^*_{q,t}
\le
\mathcal I^{\mathrm{UCB}}_{q,t}
\le
\mathcal I^{\mathrm{UCB}}_{q_a,t}.
\]
Using the upper control of $\sqrt{A^{\mathrm{UCB}}}$ and Eq.~\eqref{eq:rho_order_bound},
\[
\frac{w_qA_q}{c_qn_{q,B}^2}
\le
\frac{w_{q_a}}{c_{q_a}(n_{q_a,B}-1)^2}
\left(
\sqrt{A_{q_a}}
+2C_\rho\sqrt{\frac{\ell_B}{n_{q_a,B}-2}}
\right)^2.
\]
Taking square roots and using $n_{q_a,B}-1\ge \beta_{q_a}D_B$ and $n_{q_a,B}-2\ge \beta_{q_a}D_B$ for sufficiently large $B$ yields
\[
\frac{\sqrt{w_qA_q/c_q}}{n_{q,B}}
\le
\frac{\Lambda}{D_B}
+
\frac{2C_\rho\Lambda\sqrt{\ell_B}}{\sqrt{A_{q_a}\beta_{q_a}}\,D_B^{3/2}}
\le
\frac{\Lambda}{D_B}
+
\frac{2C_\rho\Lambda\sqrt{\ell_B}}{\sqrt{A_{\min}\beta_{\min}}\,D_B^{3/2}},
\]
where $\beta_{\min}:=\min_q\beta_q>0$. Rearranging gives
\[
n_{q,B}
\ge
\frac{D_B\beta_q}{
1+\frac{2C_\rho\sqrt{\ell_B}}{\sqrt{A_{\min}\beta_{\min}D_B}}
}.
\]
Using $1/(1+x)\ge1-x$ for $x\ge0$ and $D_B=B-\sum_jc_jK-c_{\max}$,
\[
n_{q,B}
\ge
n_q^*
-
\Delta_{q,B},
\]
where
\begin{equation}
\label{eq:Delta_ppipp_final}
\Delta_{q,B}
:=
\beta_q\left(
\sum_{j=1}^Q c_jK+c_{\max}
+
\frac{2C_\rho\sqrt{B\ell_B}}{\sqrt{A_{\min}\beta_{\min}}}
\right).
\end{equation}
The upper deviation follows from the budget constraint:
\[
c_qn_{q,B}
\le
B-\sum_{j\ne q}c_jn_{j,B}
\le
c_qn_q^*+\sum_{j\ne q}c_j\Delta_{j,B}.
\]
Therefore,
\begin{equation}
\label{eq:allocation_gap_bound_app}
|n_{q,B}-n_q^*|
\le
\Gamma_{q,B}
:=
\max\left\{\Delta_{q,B},\sum_{j\ne q}\frac{c_j}{c_q}\Delta_{j,B}\right\}.
\end{equation}
With $K=O(\ell_B)$, Eq.~\eqref{eq:allocation_gap_bound_app} implies $\Gamma_{q,B}=O(\sqrt{B\ell_B})$ for each fixed problem instance.

\subsection{Step III: Converting Allocation Gaps to Regret}

On $\mathcal E^{++}$, apply Taylor's theorem to $f_q(n)=w_qA_q/n$ around $n_q^*=B\beta_q$:
\[
f_q(n_{q,B})-f_q(n_q^*)
=
-\frac{w_qA_q}{(n_q^*)^2}(n_{q,B}-n_q^*)
+
\frac{w_qA_q}{\xi_q^3}(n_{q,B}-n_q^*)^2,
\]
where $\xi_q$ lies between $n_{q,B}$ and $n_q^*$. The first-order term telescopes by the oracle first-order condition,
\[
\frac{w_qA_q}{(n_q^*)^2}
=
\frac{\Lambda^2}{B^2}c_q.
\]
Hence
\[
-\sum_q\frac{w_qA_q}{(n_q^*)^2}(n_{q,B}-n_q^*)
=
\frac{\Lambda^2}{B^2}\left(B-\sum_qc_qn_{q,B}\right)
\le
\frac{\Lambda^2c_{\max}}{B^2}.
\]
Because $\Gamma_{q,B}=O(\sqrt{B\ell_B})=o(B)$, for all sufficiently large $B$ we have $\xi_q\ge n_q^*/2=B\beta_q/2$. Therefore the second-order term satisfies
\[
\sum_q\frac{w_qA_q}{\xi_q^3}(n_{q,B}-n_q^*)^2
\le
\sum_q \frac{8w_qA_q}{B^3\beta_q^3}\Gamma_{q,B}^2
=
O\!\left(\frac{\ell_B}{B^2}\right).
\]
Combining the two terms, on $\mathcal E^{++}$ the terminal objective gap is $O(\ell_B/B^2)$.

On $(\mathcal E^{++})^c$, every question has at least $K$ initialized samples, so the terminal objective is bounded by $V_{\max}:=\sum_q w_qA_q/K$ and the oracle objective is nonnegative. Taking expectations,
\[
\mathcal R(B)
\le
O\!\left(\frac{\ell_B}{B^2}\right)
+
\delta V_{\max}.
\]
Since $\ell_B=O(\ln(QT_{\max}/\delta))$, this proves Theorem~\ref{thm:regret}. \Halmos

\section{ETC Pilot Fraction Sensitivity}
\label{appsec:etc-sweep}

\begin{table}[ht]
\centering
\footnotesize
\caption{ETC sensitivity to pilot fraction $\alpha$ on Twin-2K-500 ($B{=}1{,}000$, 500 reps). UCB and Uniform shown as reference.}
\label{tab:dt-etc-sweep}
\begin{tabular}{lcc}
\toprule
Policy & MSE ($\times 10^{-2}$) & Gap (\%) \\
\midrule
Oracle & 44.6 & --- \\
\textsc{UCB} & \textbf{46.3} & 3.9 \\
\textsc{ETC} ($\alpha=0.1$) & 74.4 & 66.9 \\
\textsc{ETC} ($\alpha=0.2$) & 60.2 & 35.2 \\
\textsc{ETC} ($\alpha=0.3$) & 53.3 & 19.7 \\
\textsc{ETC} ($\alpha=0.4$) & 49.5 & 11.1 \\
\textsc{ETC} ($\alpha=0.5$) & 47.4 & 6.4 \\
\textsc{Uniform} & 49.3 & 10.6 \\
\bottomrule
\end{tabular}
\begin{tablenotes}
\footnotesize
\item Notes: $\alpha$ is the fraction of the post-initialization budget allocated to uniform pilot sampling before committing to the Neyman rule. ETC requires $\alpha\ge 0.4$ to match uniform and $\alpha\ge 0.5$ to approach $\eps$-greedy (6.2\% gap, not shown).
\end{tablenotes}
\end{table}

\end{document}